\let\csname equation*\endcsname\relax
\let\csname endequation*\endcsname\relax
\pgfplotsset{compat=1.18}
\newtheorem{assumption}{Assumption}
\begin{document}

\title[vpal eeg]{Fast $\ell_1$-Regularized EEG Source Localization \\ Using Variable Projection}

\author{Jack Michael Solomon$^{1}$\orcidlink{0009-0001-0822-9438}, Rosemary Anne Renaut$^{2}$ \orcidlink{0000-0001-9296-0890}, Matthias Chung$^{1}$\orcidlink{0000-0001-7822-4539}}

\address{$^1$Emory University, Department of Mathematics, 400 Dowman Drive, Atlanta, GA, USA}

\address{$^2$Arizona State University, Department of Mathematics and Statistics, Wexler Hall, 901 Palm Walk Room 216, Tempe, AZ, USA}

\eads{\mailto{jack.michael.solomon@emory.edu}, \mailto{renaut@asu.edu}, \mailto{matthias.chung@emory.edu}}

\begin{abstract}
    Electroencephalograms (EEG) are invaluable for treating neurological disorders, however, mapping EEG electrode readings to brain activity requires solving a challenging inverse problem.  Due to the time series data, the use of $\ell_1$ regularization quickly becomes intractable for many solvers, and, despite the reconstruction advantages of $\ell_1$ regularization, $\ell_2$-based approaches such as {\tt sLORETA} are used in practice. In this work, we formulate EEG source localization as a graphical generalized elastic net inverse problem and present a \emph{variable projected} algorithm ({\tt VPAL}) suitable for fast EEG source localization. We prove convergence of this solver for a broad class of separable convex, potentially non-smooth functions subject to linear constraints and include a modification of {\tt VPAL} that reconstructs time points in sequence, suitable for real-time reconstruction. Our proposed methods are compared to state-of-the-art approaches including {\tt sLORETA} and other methods for $\ell_1$-regularized inverse problems.

\end{abstract}

\noindent\textbf{Keywords:} inverse problems, variable projection,  sparsity, electroencephalogram

\section{Introduction}

An electroencephalogram (EEG) is a well-established diagnostic tool for recording brain activity and detecting potential neurological disorders.  The EEG is widely used in the diagnosis of neurological disorders such as epilepsy and the treatment of traumatic brain events such as strokes or injuries. In clinical settings, EEGs have several advantages over other brain imaging techniques--they are inexpensive, non-invasive, and directly measure neuron activity on the brain's cortical surface \cite{eegreview, NOACHTAR200922, Michel2019EEGSI}.

An array of $p$ electrodes on the scalp measures electrical signals produced by neuron activity. The forward process, which describes the causal relationship between current densities on the cortical surface of the brain and voltage detected by the electrodes, is modeled by the \emph{lead field matrix} $\bfL$.  Assuming that the dipole orientation at each source location is perpendicular to the surface of the brain, a scalar can represent the density at each source location, and thus $\bfL \in \bbR^{p\times n}$ with $n$ denoting the number of source locations. Hence, each column of $\bfL$ maps the \emph{current density} at time $t$, of \emph{source locations} $\bfx(t) \in \bbR^n$ to the voltage $\bfb(t) \in \bbR^p$ measured by the electrodes on the scalp \cite{Sarvas1987BasicMA, HUANG2016150, LFMBOOK}. We model the source locations on the cortical surface as vertices of a surface mesh and consider temporal dynamics. That is, at a fixed time $t$, we may describe the relationship by
\begin{equation}
    \bfb(t) = \bfL\bfx(t) + \bfvarepsilon(t),
    \label{equation:eq1}
\end{equation}
where $\bfvarepsilon(t)$ describes additive noise in the measuring process.  The resolution of the cortical surface mesh depends on $n$.  For clinical use, it is necessary to infer which regions of the cortical surface are active from the electrode readings, and thus an \emph{inverse problem} must be solved. Moreover, \emph{real-time} reconstruction of high-resolution neural brain activity is desirable.

For simplicity of notation, we assume an EEG records time dynamic data $\bfb(t)$ at discrete time points $t = 1,\ldots, T$. In short we write $\bfB = \begin{bmatrix}
    \bfb_1,\ldots,\bfb_T
\end{bmatrix} \in \bbR^{p\times T}$. 
Inference of the electrical activity at each source location for each time point, $t$, is commonly referred to as \emph{source localization} and we represent these values by $\bfX = \begin{bmatrix}
    \bfx_1,\ldots,\bfx_T
\end{bmatrix} \in \bbR^{n\times T}$.  The underlying \emph{inverse problem} is formulated as: given observations $\bfB$, lead field matrix $\bfL$ and assumption on the noise distribution $\bfvarepsilon(t)$, reconstruct the temporal brain activity on the cortical mesh $\bfX$.  As a variational inverse problem, this can be formulated, typically under Gaussian noise assumptions, as a least-squares problem
\begin{align}\label{eq:ls}
    \argmin_{\bfX\in\bbR^{n \times T}}  \ \sum_{i = 1}^T \norm[2]{\bfL\bfx_i - \bfb_i}^2 = \argmin_{\bfX \in \bbR^{n\times T}} \norm[\fro]{\bfL\bfX - \bfB}^2,
\end{align}
where $\norm[2]{\mdot}$ is defined as the $2$-norm and $\norm[\fro]{\mdot}$ denotes the Frobenius norm.  \Cref{eq:ls} does not consider a time-dependent dynamic solution and each $\bfx_i$ may be solved independently.

Typically, this inverse problem is significantly underdetermined because the number of locations, $n$, at which  estimates of the activity are required, far exceeds the number of EEG electrodes, i.e., $p\ll n$. At the same time, noise contributions are significant \cite{hadamard1923lectures, Invrev}. Regularization is required to stabilize the inverse solution.  Thus, we consider solving the \emph{regularized} inverse problem
\begin{align}
    \argmin_{\bfX \in \bbR^{n\times T}} \norm[\fro]{\bfL\bfX - \bfB}^2 + \calR(\bfX),
\end{align}
where $\calR\colon \bbR^n\to\bbR$ encodes prior knowledge as a regularizer, \cite{hansen2010discrete}.
Proposed methods for solving this inverse problem largely vary in how prior information is included through the regularization term.  One common choice is  Tikhonov regularization i.e., $\calR(\bfX) = \mu\|\bfX\|_\fro^2$ \cite{hansen2010discrete}. This serves as the foundation for \emph{standardized low-resolution brain electromagnetic tomography} ({\tt sLORETA}), where the inverse problem is solved using matrix pseudo inverses, and is a standard choice in practice \cite{pascual2002standardized}.  

Although these straightforward $\ell_2$-regularized approaches are very fast, they frequently overestimate the extent of activated regions \cite{142641, PASCUALMARQUI199449, gramfort2012mixed}.  To address these limitations, and to incorporate the neurological understanding that realistic brain activity would be localized to only a few locations \cite{Michel2019EEGSI}, $\ell_1$-based priors were introduced promoting sparsity on the reconstructed meshes. 
Setting $\calR(\bfX) = \norm[1, 1]{\bfX}$, where $\norm[1,1]{\bfC} = \norm[1]{\bfc}$, $\bfc = \vec{\bfC}$ denotes the vectorized form of matrix $\bfC$ and $\norm[1]{\mdot}$ is the vector 1-norm, is commonly referred to as \emph{LASSO regression} \cite{51791361-8fe2-38d5-959f-ae8d048b490d}. Due to the separability of the $\ell_1$-norm, each time point remains independent of other time points. Thus, while effective in constructing sparse signals, an $\ell_1$-regularization term alone fails to capture the time dynamics of the data in EEG source localization \cite{gramfort2012mixed, FRISTON20081104}. 

Other approaches have leveraged combinations of $\ell_1$- and $\ell_2$-norms to recover sparse reconstructions, while also enforcing smoothness in the reconstructed signal over time \cite{UUTELA1999173, gramfort2012mixed, OU2009932, HAUFE2008726, 8051087}. Models that use a linear combination of   
$\ell_1$- and $\ell_2$-regularization terms are often referred to as \emph{elastic net} regularization \cite{zou2005regularization}.  \emph{Generalized elastic net} regularized models are elastic net regularized models that include operators in the $\ell_1$- and $\ell_2$-terms. The discretized \emph{total-variation} operator ({\tt TV}), which approximates the derivative at each entry of the vector via a finite difference approximation, is a common choice \cite{Caselles2015}.  In imaging, this computes the sum of the absolute differences between each pixel and its adjacent pixels.  Pairing this with an $\ell_1$-norm often reduces noise in the reconstruction and maintains sharp edges \cite{RUDIN1992259}. In this work, we frame the EEG source localization problem as a generalized-elastic net regularized optimization problem and use total variation operators for both the spatial reconstruction and the time dynamics of the signal, hence removing the time-independence of each $\bfx_i$. \\

\noindent \emph{Approaches and Challenges. } Despite the observed advantages of elastic net approaches, corresponding algorithms are typically computationally expensive preventing real-time reconstructions due to the non-smooth $\ell_1$ term.  Solving the inverse problem is further challenging when working with dynamic datasets and high-resolution meshes since this results in data too large for $\ell_1$ solvers to handle \cite{boyd2011distributed, chung2022variable}--a high-resolution mesh may contain on the order of $10,\!000$ nodes measured across hundreds of time points.  In the context of EEG reconstruction, high-resolution meshes can be mitigated by dividing up the cortical mesh into different regions and performing reconstruction on the smaller problems.  However, this may result in inaccurate artifacts on the edges of these regions.  For clinical applications, an EEG algorithm that can quickly perform reconstructions, or even compute localizations in real-time is desirable. Unfortunately, due to the significant computational burden associated with elastic net approaches, reconstructions are currently confined to post-hoc analysis. Because of these challenges, computationally less demanding methods with quick turnaround are currently preferred over elastic net approaches \cite{pascual2002standardized}. Thus there is a need for an elastic net solver capable of handling larger data efficiently.\\

\noindent\emph{Contributions.} Our work has several key contributions: 
Our first contribution is phrasing the EEG problem as a large-scale variational inverse problem regularized by graph total variation. This extends existing approaches and enables high-resolution inversion \cite{Caselles2015}. To address this large-scale inverse problem, we introduce a time-dynamic \emph{variable-projected augmented Lagrangian} ({\tt VPAL}) method. This method builds upon the approaches in \cite{chung2022variable}, which are limited by specific gradient updates. Our method offers greater flexibility in the update steps, leading to improved computational efficiency and outperforming existing state-of-the-art approaches. We present a novel convergence proof for the {\tt VPAL} approach on a non-restrictive class of functions, applicable to broader settings including nonlinear forward models. For EEG applications, we introduce a sequential version of our {\tt VPAL}-based method, where reconstruction at each time point is accelerated by using information from the previous time point and is suitable for near real-time data reconstruction.  Numerical comparisons against existing state-of-the-art methods, such as {\tt sLORETA}, are also provided illustrating the benefits of our approach.\bigskip

\noindent\emph{Structure.} This work is organized as follows. \Cref{section:background} details the formulation of the EEG inverse problem and 
reviews existing state-of-the-art approaches applicable in this context.  \Cref{sec:vpal} introduces our variable projected approach and establishes the theoretical convergence properties of our proposed method. \Cref{section:numExp} presents experimental results that demonstrate the time efficiency and scalability of our method for large datasets.  Here, we also discuss implementation details for our sequential approach, and illustrate real-time reconstruction using this method.  Finally, \Cref{section:Discussion} and \Cref{section:ConclusionsandOutlook} discuss the strengths and weaknesses of our approach and directions for future research.

Note, the approaches presented here are not restricted to the use case of EEG localization.  While we focus on source localization, the operators and algorithms that we propose apply to a wider class of (graph) inverse problems, such as graph-inpainting, anomaly detection, and other graphical tomography problems, see \cite{bianchi2021graphlaplacianimagedeblurring}.

\section{Background and Problem Setup}\label{section:background}

We consider a general elastic net model with an $\ell_1$ regularization term on the spatial data and an $\ell_2$ regularization term for temporal data. This model is founded on the common assumption that a sparse set of active brain regions generates the measured electrode signals and noise contributions are significant \cite{Michel2019EEGSI}. Sparsity in the total variation of mesh values is imposed to suppress non-pertinent sharp changes in activation values, thereby isolating significant signals and mitigating noise. Time regularization on the other hand intends to reduce noise and enforce smooth changes in activation over time, based on the assumption that changes in the activation values are smooth in time assuming a fine enough discretization of time points.  An $\ell_2$ term is therefore suitable for time regularization due to the smoothing effect it will have on the result \cite{Diffellah2021ImageDA}.  For the remainder of this paper, sparsity is used to refer to sparsity in the total variation of the mesh as opposed to the number of non-zero entries of the mesh.   

Let the mesh of the cortical surface be represented as a temporal graph network consisting of $n$ nodes and $m$ edges corresponding to the vertices and edges in the mesh respectively with the value of each node dependent on time point $t$. The resulting optimization problem can be formulated as
\begin{equation} \label{equation:objective}
   \argmin_{\bfX\in\bbR^{n\times T}} \quad f(\bfX) =  \thf\norm[\fro]{\bfL\bfX - \bfB}^2 + \tfrac{\lambda^2}{2}\norm[\fro]{\bfX\bfD_1}^2 + \mu\norm[1,1]{\bfD_2\bfX}, 
\end{equation}
with $\lambda,\mu >0$ The matrix $\bfD_1 \in \bbR^{T\times T-1}$ is the finite time-difference operator and $\bfD_2 \in \mathbb{R}^{m \times n}$ is the finite-difference operator \cite{Caselles2015} on the cortical mesh. Note while \Cref{eq:ls} can be formulated as $T$ independent optimization problems,  with the introduction of the time regularization term we established a dependence between the solutions $(\bfx_i)_{i=1}^T$. 

The operation with $\bfD_1$ can be represented as
\begin{align}
    \bfX\bfD_1 = \begin{bmatrix}
    \bfx_1 - \bfx_2, & \bfx_2 - \bfx_3,& \ldots, & \bfx_{T-1} - \bfx_T\end{bmatrix},
\end{align}
where we omit the time-frequency scaling constant, noting that it is absorbed into the regularization weighting parameter $\lambda$.
 
The {\tt graphTV} operator, $\bfD_2$, requires more attention.  Using graph theory notation, the $\ell_1$-norm of the total variation of a graph, $G = (V, E)$, where $V$ represents the vertices and $E$ the edges, is given as
\begin{align}
    \norm[1]{{\tt graphTV}(G)} = \sum_{(u, v) \in E} |w_{(u, v)}(u.\textnormal{value} - v.\textnormal{value})|
\end{align}
with graph $G$ either directed or undirected, and $w_{(u, v)}$ the weight of the edge $(u, v)$.  We may state the {\tt graphTV} operation for the EEG problem as
\begin{align}\label{eq:graphTV}
    \bfD_2\bfX = \bfW(\bfP_1\bfX - \bfP_2\bfX),
\end{align}
where $\bfW \in \bbR^{m \times m}$ is a diagonal matrix consisting of the weights of the edges. 
Further, $\bfP_1, \bfP_2 \in \bbR^{m \times n}$ are permutation matrices that index $\bfX$ to facilitate the subtraction of each node's value from only its immediate neighbors.  This definition of {\tt graphTV} is analogous to total variation ({\tt TV}) for images, see \Cref{fig:TV_illustration}.

\begin{figure}[htb!]
    \centering
    \includegraphics[width=0.7\linewidth]{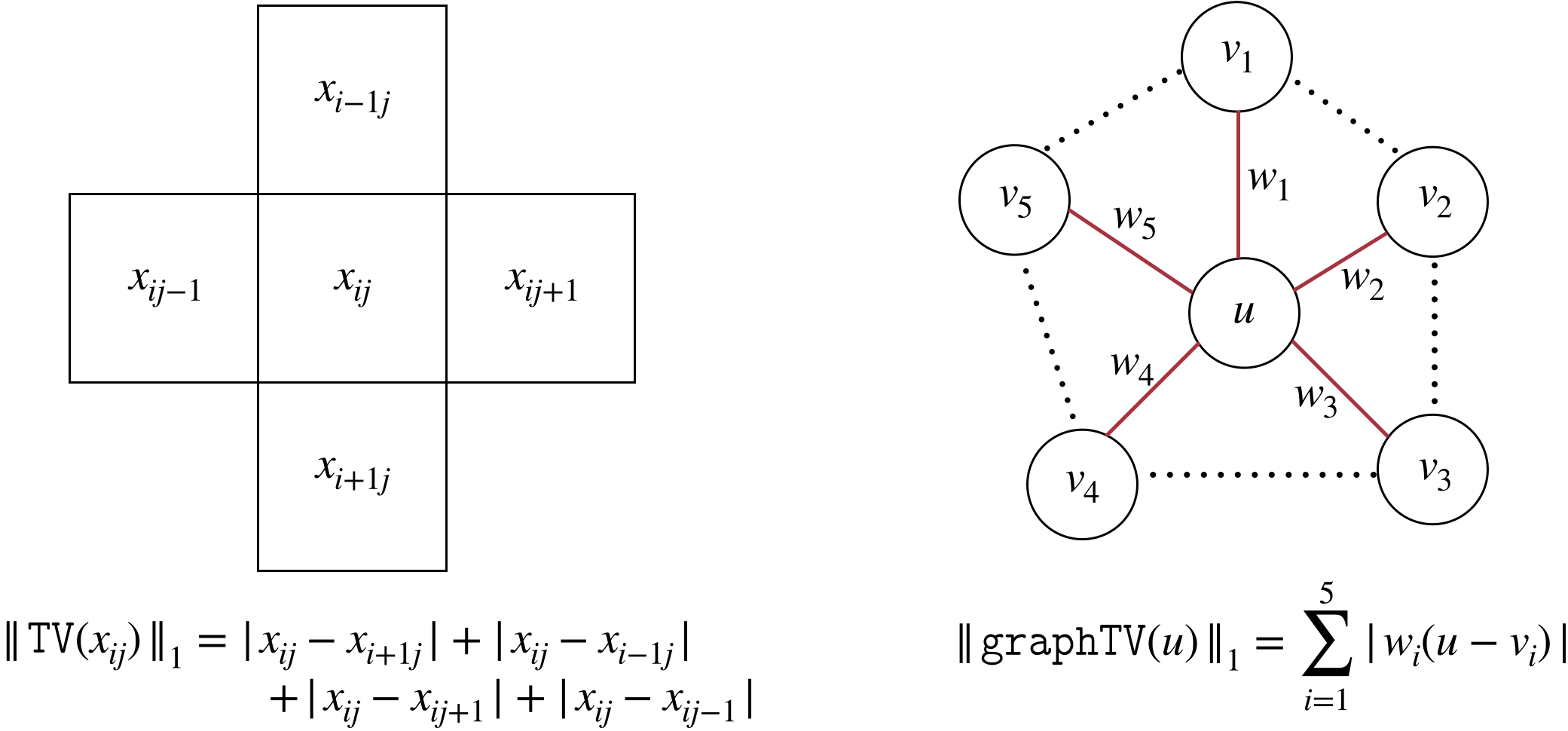}
    \caption{Illustration of the difference between the standard total variation operator  {\tt TV} for images in the left panel and our implementation of {\tt graphTV} on the right.  Note that in the image, the total variation is calculated as the difference between the middle pixel, $x_{ij}$ and adjacent pixels.  This is defined analogously for graphs where adjacency is interpreted as having an edge instead.}
    \label{fig:TV_illustration}
\end{figure}

One common approach for solving elastic net regularized problems involves splitting methods \cite{Liu2023}.  These methods introduce an auxiliary variable to decouple the regularization term from the rest of the objective function. This reformulation allows the problem to be expressed as an augmented Lagrangian, which combines the original objective with a penalty term that enforces consistency between the original variable and the newly introduced surrogate variable.  Then, an alternating optimization procedure is employed updating the primary variable, the auxiliary variable and the Lagrange multipliers.  This procedure iteratively minimizes the augmented Lagrangian with respect to each variable while holding the other fixed.  This splitting strategy often simplifies the optimization process, as each subproblem becomes easier to solve than the original, non-split problem.

The \emph{Alternating Direction Method of Multipliers} ({\tt ADMM}) is a standard approach to solving general elastic net regularized problems \cite{boyd2011distributed}.  In general, {\tt ADMM} applies to separable optimization problems, subject to linear constraints i.e.,
\begin{align}\label{eq:splitting1}
    \argmin_{\bfx, \bfy} \; & q(\bfx) + r(\bfy) \\
    \textrm{s.t.} & \; \bfQ\bfx + \bfR\bfy = \bfv, \label{eq:splitting2}
\end{align}
with $\bfx \in \bbR^{n_x}, \bfy \in \bbR^{n_y},  \bfv \in \bbR^{n_p}, \bfQ \in \bbR^{n_p \times n_x}, \bfR \in \bbR^{n_p \times n_y}, $ and $q:\bbR^{n_x} \rightarrow \bbR, r:\bbR^{n_y} \rightarrow \bbR$, both convex.  Due to the constraints, an optimal solution can be found through the minimization of the augmented Lagrangian which has the form
\begin{align}\label{eq:auglag}
    h(\bfx, \bfy; \bfc) = \varphi(\bfx, \bfy; \bfc) +  r(\bfy),
\end{align}
where $\bfc \in \bbR^{n_p}$ denotes the Lagrange multipliers and  $\varphi:\bbR^{n_x} \times \bbR^{n_y} \rightarrow \bbR$ denotes the terms of the augmented Lagrangian that do not soley depend on $\bfy$.  The {\tt ADMM} algorithm is then given by the three-step update procedure summarized in \Cref{alg:admm}.  Here, $\eta > 0$ denotes the penalty parameter for the augmented Lagrangian \cite{cfc5dc07425343f08f3c8ee5ae8f7ddc}.

\begin{algorithm}[htb!]
\caption{{\tt ADMM}}\label{alg:admm}
    \begin{algorithmic}[1]
    \State \textbf{input} $\varphi(\bfx, \bfy; \bfc)$, $r(\bfy)$, $\eta$, $\bfx_0$, $\bfR$, $\bfQ$, $\bfv$ 
   \State initialize $\bfx_0, \bfy_0={\bf0}, \bfc_0 = \bf0$ and set $k = 0$
        \While{not converged}
        \State $\bfx_{k+1} = \argmin_\bfx \varphi(\bfx, \bfy_k; \bfc_k)$ \label{ln:xadmm}
        \State $\bfy_{k+1} = \argmin_{\bfy} \varphi(\bfx_{k+1}, \bfy; \bfc_k) + r(\bfy)$\label{ln:yadmm}
        \State $\bfc_{k+1} = \bfc_k + \eta^2(\bfQ\bfx_{k+1} + \bfR\bfy_{k+1} - \bfv)$
        \State $k = k + 1$
        \EndWhile
        \State \textbf{output} $\bfx_k$
    \end{algorithmic}
\end{algorithm}

{\tt ADMM} addresses \Cref{equation:objective} by introducing an auxiliary variable $\bfY = \bfD_2\bfX$ to decompose the problem into a separable problem subject to the constraint $\bfY = \bfD_2\bfX$, see
\cite{GLW1975, GABAY197617, boyd2011distributed}. Letting $\bfC$ denote the Lagrange multipliers, the augmented Lagrangian for our particular problem is given as
\begin{align}\label{equation:aug_lag}
   \calL_{\textnormal{aug}}(\bfX, \bfY ; \bfC) = \thf \norm[\fro]{\bfL\bfX-\bfB}^2 + \tfrac{\lambda^2}{2}\norm[\fro]{\bfX\bfD_1}^2 + \mu\norm[1,1]{\bfY} + \tfrac{\eta^2}{2}\norm[\fro]{\bfD_2\bfX-\bfY+\bfC}^2 - \tfrac{\eta^2}{2}\norm[\fro]{\bfC}^2.
\end{align}
Note, since $\tfrac{\eta^2}{2}\norm[\fro]{\bfC}^2$ is independent of $\bfX$ and $\bfY$ in \Cref{equation:aug_lag} we omit it when discussing updates for $\bfX$ and $\bfY$. Additionally, \Cref{equation:objective} \Cref{eq:auglag} are given in terms of matrices, whereas \Cref{eq:splitting1} and \Cref{eq:splitting2} are in terms of vectors.  Through Kronecker-product identities we can vectorize our objective function to fit \Cref{eq:splitting1} and \Cref{eq:splitting2}--see \Cref{appendix:vectorizedforms}.  Following \Cref{alg:admm}, and rewriting updates in matrix-form, the algorithm proceeds as follows.  Indexing each iterate by $k$, the introduction of surrogate variable $\bfY$ results in the splitting of the minimization of $\calL_{\textnormal{aug}}(\bfX, \bfY; \bfC)$ into a repeated linear least squares problem 
\begin{align}\label{equation:genericxupdate}
    \bfX_{k+1} \in \argmin_{\bfX}\thf \norm[\fro]{\bfL\bfX - \bfB}^2 + \tfrac{\lambda^2}{2}\norm[\fro]{\bfX\bfD_1}^2 + \tfrac{\eta^2}{2}\norm[\fro]{\bfD_2\bfX - \bfY_k + \bfC_k}^2,
\end{align}
followed by the minimization of the terms dependent on $\bfY$, given by the soft-thresholding update
\begin{equation}\label{equation:shrinage}
  \bfY_{k+1} = \textnormal{sign}(\bfD_2\bfX_{k+1} + \bfC_k)\hadamard \text{ReLU}\left(|\bfD_2\bfX_{k+1} + \bfC_k| - \bfN\right).
\end{equation}
Here $|\mdot|$ denotes elementwise absolute value,  $(\bfN)_{ij} = \frac{\mu}{\eta^2}$, $\hadamard$ denotes the Hadamard product, and $\text{ReLU}(\mdot)$ is the rectified linear unit. The Lagrange multipliers are updated according to 
\begin{equation}
\bfC_{k+1} = \bfC_k + \eta^2(\bfD_2 \bfX_{k+1} - \bfY_{k+1}).
 \end{equation}

This method is bottlenecked by the least-squares update step \Cref{equation:genericxupdate}, which requires solving a system for $n\cdot T$ unknowns at each iteration \cite{boyd2011distributed, chung2022variable}. For large-scale problems, methods such as {\tt LSQR} or other generalized Krylov subspace approaches may be utilized, \cite{10.1145/355984.355989, fastADMM}. Nevertheless, solving \Cref{equation:genericxupdate} remains the computationally expensive part.

EEG problems that include dynamic data or high-resolution meshes quickly become intractable for {\tt ADMM} due to this bottleneck. A common alternative to {\tt ADMM} is the \emph{Fast Iterative Threshold-Shrinking Algorithm} ({\tt FISTA}), see \cite{doi:10.1137/080716542}.  This approach is built around the minimization of the proximal at each iteration which has a closed-form solution for standard elastic net problems resulting in fast and efficient updates.  However, {\tt FISTA} loses this efficiency for generalized elastic net problems due to the inclusion of an operator in the $\ell_1$-norm, as is the case in \Cref{equation:objective} with $\bfD_2$.  The minimization of the proximal subproblem no longer has a closed-form solution, and thus must be solved iteratively \cite{manoel2018approximatemessagepassingconvexoptimization}.  As another downside, {\tt FISTA} requires the estimation of the Lipschitz constant of the objective function, which poses an additional challenge. For comparison purposes, we include an implementation of {\tt ADMM} and a variation of {\tt FISTA} where each update is solved iteratively using {\tt VPAL}, see \Cref{appendix:algo_options} and \Cref{appendix:admm_update}.

\section{Variable Projection Augmented Lagrangian Method ({\tt VPAL})}\label{sec:vpal}

This section details our proposed Variable Projected Augmented Lagrangian (VPAL) method.   \Cref{sub:vpalncg} presents the derivation and specifics of the method using nonlinear conjugate gradient updates.  Convergence results for a broader class of objective functions are provided in  \Cref{section:theory}.

\subsection{{\tt VPAL} with Nonlinear Conjugate Gradient Update}\label{sub:vpalncg}
In light of the challenge discussed in the previous section, we propose to use a \emph{Variable Projected Augmented Lagrangian} ({\tt VPAL}) approach which replaces the computationally expensive linear solve in {\tt ADMM} with an efficient variable projected-update step suitable for inference on massive graphs \cite{chung2022variable}. For analysis purposes and to ease notation, we consider the equivalent vectorized form ($\bfx = \vec{\bfX}$) of our objective function \Cref{equation:objective} and augmented Lagrangian \Cref{equation:aug_lag} (see derivations in \Cref{appendix:vectorizedforms}). Using $\otimes$ to denote the Kronecker product, then  for $\bfA = \begin{bmatrix}
    \bfI_T \otimes \bfL \\ \lambda(\bfD_1^\top \otimes \bfI_n)
\end{bmatrix} \in \bbR^{Tp + (T-1)n \times Tn}$ , $\bfw = \begin{bmatrix}
    \bfb \\ \textbf{0}_{n(T-1)}
\end{bmatrix} \in {\bbR^{Tp+n(T-1)}}$, and 
    $\tilde{\bfD}_2 = \bfI_T \otimes \bfD_2 \in \bbR^{mT \times nT},$
 we have

\begin{align}
f(\bfx) = \thf\norm[2]{\bfA\bfx - \bfw}^2 + \mu \norm[1]{\tilde \bfD_2 \bfx},
\end{align}
and \begin{align}\label{equation:vectorizedauglag}
       \calL_{\textnormal{aug}}(\bfx, \bfy ; \bfc) =  \thf\norm[2]{\bfA\bfx - \bfw}^2 + \tfrac{\eta^2}{2}\norm[2]{\tilde{\bfD}_2\bfx - \bfy + \bfc}^2 + \mu \norm[1]{\bfy} - \tfrac{\eta^2}{2}\norm[2]{\bfc}^2.
\end{align}

We denote the gradient of $\calL_{\rm aug}(\bfx, \bfy)$ with respect to $\bfx$, $\nabla_\bfx \calL_{\rm aug}(\bfx, \mdot) = \bfg(\bfx, \mdot)$, and $\nabla_\bfx \calL_{\rm aug}(\bfx^j, \bfy) = \bfg^j(\bfy)$.  We focus specifically on \Cref{ln:xadmm} and \Cref{ln:yadmm} of \Cref{alg:admm}.  Since $\bfc$ is fixed during the execution of these two lines we suppress dependence on $\bfc$, and consider
\begin{align}\label{eq:fjoint}
    f_{\textnormal{joint}}(\bfx, \bfy) = \thf\norm[2]{\bfA\bfx - \bfw}^2 + \tfrac{\eta^2}{2}\norm[2]{\tilde{\bfD}_2\bfx - \bfy + \bfc}^2 + \mu \norm[1]{\bfy}.
\end{align}
Minimizing this function $f_{\text{joint}}$ with respect to $\bfy$ has the closed-form solution 
\begin{align}\label{equation:l1terms}
    \bfZ(\bfx) = \argmin_{\bf{y}} \tfrac{\eta^2}{2}\norm[2]{\tilde\bfD_2\bfx - \bfy + \bfc}^2 + \mu \norm[1]{\bfy}.
\end{align}
Here $\bfZ: \bbR^{nT} \rightarrow \bbR^{mT}$ is the vectorized continuous mapping of the soft thresholding step $\bfy$, see \Cref{equation:shrinage}
\begin{align}\label{eq:shrink}
    \bfZ(\bfx) = \sign{\tilde \bfD_2\bfx + \bfc} \hadamard \text{ReLU}\left(|\tilde\bfD_2\bfx + \bfc| - \tfrac{\mu}{\eta^2}\right).
\end{align}
Using the closed form expression for optimal $\bfy$, we project the joint problem onto the space of optimal $\bfy$ and consider the projected problem

\begin{equation}\label{eq:proj}
    f_{\rm proj}(\bfx) = \thf\norm[2]{\bfA\bfx - \bfw}^2 + \tfrac{\eta^2}{2}\norm[2]{\tilde{\bfD}_2\bfx - \bfZ(\bfx) + \bfc}^2 +\mu \norm[1]{\bfZ(\bfx)},
\end{equation}
which has implicit dependence on $\bfy$ through $\bfZ(\bfx)$.  

In the variable projected approach, the computationally expensive solve to update $\bfx$ is replaced by an iterative line search, solving for $\bfx$ that minimizes the projected problem $f_{\text{proj}}$ \cite{doi:10.1137/0710036, chung2022variable}. This approach is based on the following intuition: Away from the coordinate axes, we have that the $\bfy$-dependent terms of $\calL_{\rm aug}$ are differentiable and convex.  Thus 
\begin{align}
    \eta^2 (\tilde \bfD_2\bfx + \bfc - \bfy) - \mu\, \sign{\bfy} = {\bf0}
\end{align}
is a necessary condition on $\bfy$ to minimize $\calL_{\rm aug}$ if $\bfx$ is treated as a constant. We also observe the following property of $f_{\textnormal{proj}}(\bfx)$ away from points where the function is not differentiable.  Let $\calJ_\bfZ(\bfx)$ denote the Jacobian of $\bfZ(\bfx)$.  Then,

\begin{multline}
 \nabla_\bfx\left(f_{\rm proj}(\bfx)\right) =  \bfA\t(\bfA\bfx -\bfw) +  \tfrac{\eta^2}{2} \nabla_\bfx\left(\norm[2]{\tilde{\bfD}_2\bfx - \bfZ(\bfx) +\bfc}^2 \right) + \mu \nabla_\bfx \left( \norm[1]{\bfZ(\bfx)}\right)\\  
   = \bfA\t(\bfA\bfx -\bfw) + \eta^2(\tilde{\bfD}_2^\top\tilde{\bfD}_2\bfx + \tilde{\bfD}_2^\top\bfc - \tilde{\bfD}_2^\top\bfZ(x) + \calJ_\bfZ(\bfx)^\top\tilde{\bfD}_2\bfx + \calJ_\bfZ(\bfx)^\top \bfZ(\bfx) +  \calJ_\bfZ(\bfx)^\top \bfc) + \mu\,\calJ_\bfZ(\bfx)^\top\sign{\bfZ(\bfx)}\\
   = \bfA\t(\bfA\bfx -\bfw) + \eta^2 \left(\tilde{\bfD}_2\t\tilde{\bfD}_2\bfx + \tilde{\bfD}_2\t \bfc -\tilde{\bfD}_2\t\bfZ(\bfx)\right) +\calJ_\bfZ(\bfx)\t \left(\underbrace{\eta^2\left(-\tilde{\bfD}_2\bfx +\bfZ(\bfx) - \bfc\right) + \mu\,\sign{\bfZ(\bfx)}  }_{=\bf0}\right) \\
   = \bfA\t(\bfA\bfx -\bfw) + \lambda^2 \tilde{\bfD}_2\t\left(\tilde{\bfD}_2\bfx + \bfc -\bfZ(\bfx)\right). 
\end{multline}
Thus, away from points where $f_{\rm proj}$ is not differentiable, the gradient of the projected problem is the same as if $\bfZ(\bfx)$ were constant in $\bfx$. That is, in the projected space of optimal $\bfy$, we need not consider the dependence of $\bfy$, and can solve this as a univariate optimization problem.  The variable projected approach reflects this intuition by updating $\bfx$ through a line search as if $\bfy$ were constant and then updates $\bfy$ to be optimal at the new location. Ignoring the non-differentiability will cause difficulties in an iterative optimization routine and more rigorous analysis is given in \Cref{section:theory}. The full algorithm is given in \Cref{alg:vpal} and with appropriate assumptions, is shown to converge for the non-differentiable EEG objective function.

The work \cite{chung2022variable} introduced variable projection for $\ell_1$-regularized inverse problems and focused on using the negative gradient direction for the $\bfx$ update and provides the corresponding convergence analysis for this case.  We propose a generalization, where $\bfx$ is updated by a \emph{nonlinear conjugate line search} \cite{10.1093/comjnl/7.2.149, cfc5dc07425343f08f3c8ee5ae8f7ddc} since the nonlinear conjugate gradient update often performs better in comparison to the steepest descent in smooth unconstrained optimization problems. We refer to this algorithm using the same name as \cite{chung2022variable}, the \emph{variable projected augmented Lagrangian} ({\tt VPAL}), since we only modify the line search in each iteration.  Note that the particular EEG problem at hand fits within the problem setup \Cref{eq:splitting1,eq:splitting2,eq:auglag}, specifically we have $\varphi(\bfx, \bfy; \bfc) = \thf\norm[2]{\bfA\bfx - \bfw}^2 + \tfrac{\lambda^2}{2}\norm[2]{\tilde{\bfD}_2\bfx - \bfy + \bfc}^2, r(\bfy) = \mu\norm[1]{\bfy}, \calZ(\bfx) = \bfZ(\bfx), \bfg(\bfx, \bfy) = \vec{\bfL^\top(\bfL\bfX - \bfB) + \lambda^2(\bfX\bfD_1 \bfD_1^\top) +  \eta^2\bfD_2^\top(\bfD_2\bfX - \bfY + \bfC)}, \bfQ = \tilde{\bfD}_2, \bfR = -\bfI_{mT}, \bfv = {\bf0}$.  For details on the gradient computation, see \Cref{appendix:vpalncgupdate}.

\begin{algorithm}[htb!]
\caption{{\bf {\tt VPAL}}}\label{alg:vpal}
    \begin{algorithmic}[1]
    \State \textbf{input} $\varphi(\bfx, \bfy; \bfc), r(\bfy), \bfg(\bfx, \bfy)$, $\calZ(\bfx)$, $\bfx_0$, $\eta$,$\bfQ$, $\bfR$,  $\bfv$
    \State initialize $\bfx_0, \bfc_0={\bf 0}, \bfy_0={\bf0}$, $\bfs_0 = -\bfg_0(\bfy_0)$  and set $k = 0$
    \While{not converged}\label{ln:outer_start_vpalncg}

    \State set $j = 0$, $\bfx^{(0)} = \bfx_k$, $\bfy^{(0)} = \bfy_k$, $\bfs^{(0)} = -\bfg(\bfx_k, \bfy_k)$ 
        \While{not converged}\label{ln:inner_start_vpalncg}\Comment{Conjugate Gradient Line Search}
        \State Compute projected step length, $\alpha^{(j)}$\label{ln:linesearch}
        \State Update $\bfx^{(j+1)} = \bfx^{(j)} - \alpha^{(j)} \bfs^{(j)}$\label{ln:xupdate}
        \State Update $\bfy^{(j+1)} = \calZ({\bfx^{(j+1)}})$
        \State Calculate $\bfg^{(j+1)} (\bfy^{(j+1)})$
        \State Compute $\beta^{(j+1)}$ \Comment{Conjugate Gradient $\beta$ Computation} \label{ln:betaselection}
        \State $\bfs^{(j+1)} = -\bfg^{(j+1)}(\bfy^{(j+1)}) + \beta^{(j+1)}\bfs^{(j)}$
        \State $j = j+1$
    \EndWhile\label{ln:inner_end_vpal}
    \State set $\bfx_{k+1} = \bfx^{(j)}$ and $\bfy_{k+1} = \bfy^{(j)}$
     \State $\bfc_{k+1} = \bfc_k + \eta^2(\bfQ\bfx_{k+1} - \bfR\bfy_{k+1}-\bfv)$ \Comment{Lagrange Multiplier Update}
    \State $k = k+1$
    \EndWhile\label{ln:outer_end_vpal}
    \State \textbf{output} $\bfx_k$ 
    \end{algorithmic}\label{alg:vpaldynamic}
\end{algorithm}

We denote the conjugate gradient search direction as $\bfs^{(j)}$ and consider the line search update routine in \Cref{ln:xupdate} of \Cref{alg:vpal}:
\begin{align}
    \bfx^{(j+1)} = \bfx^{(j)} - \alpha^{(j)} \bfs^{(j)}.
\end{align}

To perform an update step of $\bfx$ via line search, we require an appropriate step size selection $\alpha^{(j)}$. There are various strategies to obtain this step size. We consider two approaches. 

In the first approach, we use the optimal step size. Due to the setup of this problem, this optimal step size can be computed inexpensively with the main computational cost being the evaluation of the soft thresholding. We solve the following 1-dimensional optimization problem
\begin{multline}
\argmin_{\alpha} \ f_\textnormal{proj}(\bfx^{(j)} - \alpha\bfs^{(j)}) = \argmin_{\alpha} \thf \norm[2]{\bfA(\bfx^{(j)} - \alpha \bfs^{(j)})  - \bfw}^2 +  \\
+\tfrac{\eta^2}{2}\norm[2]{\tilde{\bfD}_2(\bfx^{(j)} - \alpha \bfs^{(j)}) - \bfy^{(j)} + \bfc_k}^2  + \mu \norm[1]{\bfZ(\bfx^{(j)} - \alpha\bfs^{(j)})}.
\end{multline}
Hence, efficient minimization can be attained through the application of a simple one-dimensional iterative solver, where previous $\alpha^{(j)}$'s may be utilized to accelerate convergence.

Another approach is to ignore the nonlinearity of the optimization problem \Cref{equation:genericxupdate} and use the linearized optimal step size corresponding to the linear conjugate gradient method. Empirically we observe that this linearization is sufficient and gives good estimates, and we can always fall back to the the optimal stepsize in case this one fails. To obtain the linearized step size, we ignore the nonlinear shrinkage term in \Cref{eq:shrink} and instead solve
\begin{multline}
\argmin_{\alpha} \ f_\textnormal{proj}(\bfx^{(j)} - \alpha\bfs^{(j)}) = \thf \norm[2]{\bfA(\bfx^{(j)} - \alpha \bfs^{(j)})  - \bfw}^2 
+\tfrac{\eta^2}{2}\norm[2]{\tilde{\bfD}_2(\bfx^{(j)} - \alpha \bfs^{(j)}) - \bfy^{(j)} + \bfc_k}^2.
\end{multline}
In this case, we can obtain the optimal closed-form solution
\begin{align}
    \alpha^{(j)} = \frac{\bfs^{(j)\top}\vec{\bfL^\top\bfL\bfX^{(j)} - \bfL^\top\bfB + \eta^2\bfD_2^\top\bfD_2\bfX + \bfD_2^\top\bfD_2(\bfC_k - \bfY^{(j)}) + \lambda^2\bfX^{(j)}\bfD_1\bfD_1^\top}}{\bfs^{(j)\top}\vec{\bfL^\top\bfL\bfS^{(j)} + \eta^2\bfD_2^\top \bfD_2\bfS^{(j)} + \lambda^2 \bfS^{(j)}\bfD_1\bfD_1^\top}}.
\end{align}

There are several options for the computation of $\beta^{(j)}$ in the nonlinear conjugate gradient update and the interested reader is referred to \cite{cfc5dc07425343f08f3c8ee5ae8f7ddc}.  In our implementation, we use a combination of the Fletcher-Reeves (FR) \cite{10.1093/comjnl/7.2.149} and Polak-Ribiére (PR) \cite{PRupdate} formulas which have empirically demonstrated robust performance on a variety of problems.  The FR $\beta$ update is given by
\begin{align}
    \beta^{(j+1)}_{\textnormal{FR}} = \frac{\bfg^{j+1}(\bfy^{(j+1)})^\top \bfg^{j+1}(\bfy^{(j+1)})}{\bfg^{j}(\bfy^{(j)})^\top \bfg^{j}(\bfy^{(j)})},
\end{align} 
and the PR update by
\begin{align}\label{eq:PR}
        \beta^{(j+1)}_{\textnormal{PR}} = \frac{\bfg^{j+1}(\bfy^{(j+1)})^\top (\bfg^{j+1}(\bfy^{(j+1)}) -\bfg^{j}(\bfy^{(j)}) )}{\bfg^{j}(\bfy^{(j)})^\top \bfg^{j}(\bfy^{(j)})}.
\end{align}
A more robust computation of $\beta^{(j+1)}$ is given as follows
\begin{align}\label{eq:betac}
    \beta^{(j+1)} = \begin{cases}
        -\beta^{(j+1)}_{\textnormal{FR}}, & \textnormal{for }  \; \beta^{(j+1)}_{\textnormal{PR}} < -\beta^{(j+1)}_{\textnormal{FR}},\\[1.5ex]
        \beta^{(j+1)}_{\textnormal{PR}}, & \textnormal{for } \; \big|\beta^{(j+1)}_{\textnormal{PR}}\big| \leq \beta^{(j+1)}_{\textnormal{FR}}, \\[1.5ex]
        \beta^{(j+1)}_{\textnormal{FR}}, & \textnormal{for } \; \beta^{(j+1)}_{\textnormal{PR}} > \beta^{(j+1)}_{\textnormal{FR}}.
    \end{cases}
\end{align}

Although we prove convergence for the $\beta_{\rm PR}$ update with a particular line search discussed in \Cref{section:theory}, implementation with the linearized step size selection and the $\beta$ computation given in \Cref{eq:betac} performed better numerically. 

As noted in \cite{chung2022variable}, practical implementations of \Cref{alg:vpal} only perform a few iterations of the inner loop.  For our implementation, we perform 2 iterations of \Cref{ln:inner_start_vpalncg} through \Cref{ln:inner_end_vpal}.  Note that reducing this to one iteration of the inner loop exactly recovers the ``standard" {\tt VPAL} presented in \cite{chung2022variable}.

Note that {\tt VPAL} is not simply {\tt ADMM} with a single (Krylov) $\bfx$ update/inexact solve.  {\tt VPAL} updates the \emph{nonlinearly projected problem} \Cref{eq:proj}, hence the name variable projection.  The inherent alternating direction approach of {\tt ADMM} means the variability in $\bfy$ is not directly incorporated into the $\bfx$ update. {\tt VPAL}, however, directly addresses the projected problem, allowing $\bfy$ to affect and guide the $\bfx$ update via the $\bfy = \bfZ(\bfx)$ relationship.

\subsection{{\tt VPAL} Convergence Analysis}\label{section:theory}

This section establishes the convergence of our proposed {\tt VPAL} method in the more general setting of \Cref{eq:splitting1}, \Cref{eq:splitting2} and \Cref{eq:auglag}, beyond the specific application to EEG and extends the proof given in \cite{chung2022variable}.  Specifically, we generalize our analysis to a convex and potentially nonlinear forward operator. To provide this convergence analysis,  we first establish conditions for convergence of our algorithm for a more general class of functions. Using these conditions, we prove Lemmas used in the Proof of our main convergence result, \Cref{thm:convergence}.  Finally, we conclude by establishing convergence criteria for our EEG source localization model.  

We consider the setup of the problem as stated in \Cref{section:background}, and focus on minimizing \Cref{eq:auglag}.  The convergence of the outer loop of \Cref{alg:vpal} is well established in {\tt ADMM} literature for convex $q(\cdot)$ and $r(\cdot)$ \cite{chung2022variable, boyd2011distributed}, thus it is sufficient to show that \Cref{ln:inner_start_vpalncg} -- \Cref{ln:inner_end_vpal} of \Cref{alg:vpal} solve
\begin{align}
    (\bfx_{k+1}, \bfy_{k+1}) = \argmin_{\bfx, \bfy} \ \ h(\bfx, \bfy; \bfc_k) = \varphi(\bfx, \bfy; \bfc_k) + r(\bfy).
\end{align}
Since $\bfc_k$ is held constant held constant in the inner loop, we suppress it for ease of notation. The convergence theory that we present for this subproblem closely follows \cite{zhou2013short, zhou2021class}, and the interested reader is referred to these references for further details. \\

\noindent\emph{Assumptions and Settings. }We now assume that $\varphi(\bfx, \bfy)$ is smooth and strictly convex in both $\bfx$, and $\bfy$, and $r(\cdot)$ is convex, but possibly non-smooth. In this case, we have $h(\bfx, \bfy)$ strictly convex, with unique minimizer denoted $(\hat\bfx, \hat\bfy)$.  We make the additional assumption that for all $\bfx$, we have access to a closed-form solution to the following minimization
\begin{align}\label{eq:yupdate}
    \calZ(\bfx) = \argmin_{\bfy \in \bbR^{n_y}}\;   h(\bfx, \bfy).
\end{align}
Denoting $\partial_{\bfy}r(\bfy)$ as the subgradient of $r(\bfy)$, if
\begin{align}
    {\bf 0} \in \nabla_{\bfx, \bfy}\varphi(\hat\bfx, \hat\bfy) + \partial_{\bfy}r(\hat\bfy),
\end{align}
then $(\hat\bfx, \hat\bfy)$ is a stationary point. Due to the strict convexity of $h(\bfx, \bfy)$ a sufficient and necessary condition for $(\hat\bfx, \hat\bfy)$ to be a minimizer is that it is a stationary point.

To prove convergence of our algorithm, the following choices are made for the step size selection and $\beta$ computation in \Cref{ln:betaselection}.  We adopt the step size selection routine proposed in \cite{zhou2013short}.  That is, given $\delta > 0, \rho \in (0, 1), \epsilon > 0$ and a positive sequence $\{\epsilon_j\}_{j = 1}^\infty$ satisfying 
\begin{align}\label{equation:eps}
    \sum_{j = 0}^\infty \epsilon_j \leq \epsilon < \infty,
\end{align}
we use a backtracking line search $\alpha^{(j)} = \max \{1, \rho^1, \rho^2, \dots \}$ satisfying  
\begin{align}\label{equation:alpha_comp}
    \varphi(\bfx^{(j)} + \alpha^{(j)}\bfs^{(j)}) \leq \varphi(\bfx^{(j)}) - \delta \norm{\alpha^{(j)}\bfs^{(j)}}^2 + \epsilon_j.
\end{align}
Furthermore, our convergence results assume the PR update of $\beta$ is given by \Cref{eq:PR}. The following Lemmas and ultimately convergence of \Cref{alg:vpal} rely on the following assumptions.
\begin{assumption}\label{assumption:1}
The lower level set,
    \begin{align}
        \calL_\epsilon(\bfx^0, \bfy^0) = \{(\bfx, \bfy)\;|\; h(\bfx, \bfy) \leq h(\bfx^0, \bfy^0) + \epsilon\} \subset{\bbR^{n_x + n_y}}
    \end{align}  is bounded, where $\epsilon$ is defined according to \Cref{equation:eps}.
\end{assumption}

\begin{assumption}\label{assumption:2}
    There exists a neighborhood $\calN$ of $\calL_\epsilon(\bfx^0, \bfy^0)$ for which the gradient of $\varphi$ with respect to $\bfx$, $\nabla_\bfx \varphi(\bfx, \bfy)$ is Lipschitz continuous.  That is, for some $L_\bfg > 0$, 
    \begin{align}
        \norm{\nabla_\bfx \varphi(\bfx_1, \bfy_1) - \nabla_\bfx \varphi(\bfx_2, \bfy_2)} = \norm{\bfg(\bfx_1,\bfy_1) - \bfg(\bfx_2, \bfy_2)} \leq L_\bfg\norm{(
            \bfx_1 - \bfx_2, \bfy_1 - \bfy_2)}
    \end{align} for all $(
        \bfx_1, \bfy_1),(
        \bfx_2, \bfy_2) \in \calN$.
    Additionally, a closed form solution to $\calZ(\bfx)$ defined according to \Cref{eq:yupdate} is given and $\calZ(\bfx)$ is Lipschitz continuous on $\calX = \{\bfx : (\bfx, \bfy) \in \calN\}$.  That is for some $L_\calZ > 0$, 
    \begin{align}
        \norm{\calZ(\bfx_1) - \calZ(\bfx_2)} \leq L_\calZ \norm{\bfx_1 - \bfx_2},
    \end{align}
    for all $\bfx_1, \bfx_2 \in \calX$.
\end{assumption}
\Cref{assumption:2} provides a bound $M > 0$ for which $\norm{\nabla_\bfx \varphi(\bfx, \bfy)} \leq M$ for all $(\bfx, \bfy)\in \calN$.  Together, \Cref{assumption:1} and \Cref{assumption:2} enable us to bound changes in function and gradient values following updates of both $\bfx$ and $\bfy$ which is essential for showing convergence to a stationary point.  This is particularly important in the proof of the following Lemmas.

\begin{lemma}
\label{lemma:levelset}
The iterates defined by \Cref{alg:vpal} remain in $\calL_{\epsilon}(\bfx^0, \bfy^0)$.  That is, 

\begin{align}
    (\bfx^{(j)}, \bfy^{(j)}) \in \calL_{\epsilon}(\bfx^0, \bfy^0)
\end{align}
for all $j \geq 0$.
\end{lemma}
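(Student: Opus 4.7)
The plan is a straightforward induction on the inner iteration index $j$. The base case $j=0$ is immediate: by the initialization $(\bfx^{(0)}, \bfy^{(0)}) = (\bfx^0, \bfy^0)$, so $h(\bfx^{(0)}, \bfy^{(0)}) = h(\bfx^0, \bfy^0) \leq h(\bfx^0, \bfy^0) + \epsilon$ since every $\epsilon_j \geq 0$.

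For the inductive step, I would assume $(\bfx^{(j)}, \bfy^{(j)}) \in \calL_\epsilon(\bfx^0, \bfy^0)$ and combine two one-step bounds. First, since $\bfy^{(j+1)} = \calZ(\bfx^{(j+1)}) = \argmin_\bfy h(\bfx^{(j+1)}, \bfy)$, we obtain $h(\bfx^{(j+1)}, \bfy^{(j+1)}) \leq h(\bfx^{(j+1)}, \bfy^{(j)})$. Second, the backtracking line-search criterion \Cref{equation:alpha_comp}, applied with $\bfy^{(j)}$ held fixed, reads
\begin{align*}
\varphi(\bfx^{(j+1)}, \bfy^{(j)}) \leq \varphi(\bfx^{(j)}, \bfy^{(j)}) - \delta \|\alpha^{(j)} \bfs^{(j)}\|^2 + \epsilon_j.
\end{align*}
Adding $r(\bfy^{(j)})$ to both sides promotes $\varphi$ to $h$, and dropping the nonpositive quadratic term yields $h(\bfx^{(j+1)}, \bfy^{(j)}) \leq h(\bfx^{(j)}, \bfy^{(j)}) + \epsilon_j$. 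Chaining the two inequalities gives the ``almost-descent'' bound $h(\bfx^{(j+1)}, \bfy^{(j+1)}) \leq h(\bfx^{(j)}, \bfy^{(j)}) + \epsilon_j$. Iterating and telescoping produces $h(\bfx^{(j+1)}, \bfy^{(j+1)}) \leq h(\bfx^0, \bfy^0) + \sum_{i=0}^j \epsilon_i$, and \Cref{equation:eps} caps the sum by $\epsilon$, placing $(\bfx^{(j+1)}, \bfy^{(j+1)})$ in $\calL_\epsilon(\bfx^0, \bfy^0)$.

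No genuine obstacle is anticipated: the slack in the level set was engineered precisely so that the summable line-search errors $\{\epsilon_j\}$ cannot push iterates out of $\calL_\epsilon$. The only conceptual point worth emphasizing is that the $\bfx$-update is a line search on $\varphi(\cdot, \bfy^{(j)})$ with $\bfy^{(j)}$ frozen (so it decreases $\varphi$, not $h$ directly), while the subsequent $\bfy$-update minimizes $h$ exactly in its second argument; it is this two-step structure that forces $h$ to decrease up to $\epsilon_j$, even though the $\bfx$-step only targets $\varphi$. The existence of a step size $\alpha^{(j)}$ satisfying \Cref{equation:alpha_comp} is taken for granted from the Lipschitz hypothesis in \Cref{assumption:2} and the construction of \cite{zhou2013short}.
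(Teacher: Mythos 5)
Your proof is correct and takes essentially the same route as the paper's: the optimality of the $\bfy$-update gives $h(\bfx^{(j+1)},\bfy^{(j+1)})\leq h(\bfx^{(j+1)},\bfy^{(j)})$, and adding $r(\bfy^{(j)})$ to the line-search criterion \Cref{equation:alpha_comp} gives $h(\bfx^{(j+1)},\bfy^{(j)})\leq h(\bfx^{(j)},\bfy^{(j)})+\epsilon_j$, exactly the chain the paper uses. If anything, your explicit telescoping of the per-step errors and capping the sum by $\epsilon$ via \Cref{equation:eps} is the more careful rendering of the induction --- the paper bounds the one-step slack by $\epsilon$ directly and invokes induction loosely, which as literally written would compound the slack across iterations, whereas your version is airtight.
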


\begin{proof}
    The optimality of each $\bfy^{(j)}$ and step size criteria for the update of $\bfx^{(j)}$ ensures that 
\begin{align}
    h(\bfx^{(j+1)}, \bfy^{(j+1)}) \leq h(\bfx^{(j+1)}, \bfy^{(j)}) \leq  \varphi({\bfx^{(j)}, \bfy^{(j)}}) - \delta \norm{\alpha^{(j)} \bfs^{(j)}}^2 + \epsilon_j +  r(\bfy^{(j)}) \leq h(\bfx^{(j)}, \bfy^{(j)}) + \epsilon,
\end{align} and thus inductively, the update of each $(\bfx^{(j)}, \bfy^{(j)})$  to $(\bfx^{(j+1)}, \bfy^{(j+1)})$ remain in $\calL_\epsilon(\bfx^0, \bfy^0)$. 
\end{proof}

\begin{lemma}\label{lemma:convlema}
   If \Cref{assumption:1} and \Cref{assumption:2} hold, then
\begin{align}
    \lim_{j \rightarrow \infty} \alpha^{(j)} \bfs^{(j)} = {\bf0} \textnormal{\quad and \quad} \lim_{j \rightarrow \infty} (\bfy^{(j+1)} - \bfy^{(j)}) = {\bf0}.
\end{align}
\end{lemma}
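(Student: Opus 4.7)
The plan is to prove both limits via a single telescoping energy argument driven by the Armijo-like descent condition \eqref{equation:alpha_comp}, and then to transfer the first limit to the second through the Lipschitz continuity of $\calZ$ provided by Assumption~\ref{assumption:2}.

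First, I would chain two inequalities at iteration $j$: (i) the optimality of $\bfy^{(j+1)} = \calZ(\bfx^{(j+1)})$ gives $h(\bfx^{(j+1)}, \bfy^{(j+1)}) \le h(\bfx^{(j+1)}, \bfy^{(j)})$, and (ii) the step-size condition \eqref{equation:alpha_comp} combined with the constancy of $r(\bfy^{(j)})$ in this comparison gives $h(\bfx^{(j+1)}, \bfy^{(j)}) \le h(\bfx^{(j)}, \bfy^{(j)}) - \delta\,\|\alpha^{(j)}\bfs^{(j)}\|^2 + \epsilon_j$. Rearranging,
\begin{equation*}
    \delta\,\|\alpha^{(j)}\bfs^{(j)}\|^2 \le h(\bfx^{(j)}, \bfy^{(j)}) - h(\bfx^{(j+1)}, \bfy^{(j+1)}) + \epsilon_j.
\end{equation*}

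Next I would telescope this from $j=0$ to $j=N$. By Lemma~\ref{lemma:levelset} every iterate lies in the bounded level set $\calL_\epsilon(\bfx^0,\bfy^0)$, and because $h$ is strictly convex with a unique minimizer $(\hat\bfx,\hat\bfy)$, it is bounded below by $h(\hat\bfx,\hat\bfy) > -\infty$. Combined with $\sum_{j=0}^\infty \epsilon_j \le \epsilon < \infty$ from \eqref{equation:eps}, the partial sums $\sum_{j=0}^N \|\alpha^{(j)}\bfs^{(j)}\|^2$ are uniformly bounded, so the infinite series converges. In particular $\alpha^{(j)}\bfs^{(j)} \to \mathbf{0}$, which is the first claim.

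For the second claim, I would use that for $j \ge 0$ we have $\bfy^{(j+1)} = \calZ(\bfx^{(j+1)})$, and also $\bfy^{(j)} = \calZ(\bfx^{(j)})$ for $j \ge 1$ (the initialization $\bfy^{(0)}$ can be handled by replacing it with $\calZ(\bfx^{(0)})$ after one step, or by noting the limit is about tails). Then Lipschitz continuity of $\calZ$ from Assumption~\ref{assumption:2} yields
\begin{equation*}
    \|\bfy^{(j+1)} - \bfy^{(j)}\| = \|\calZ(\bfx^{(j+1)}) - \calZ(\bfx^{(j)})\| \le L_{\calZ}\,\|\bfx^{(j+1)} - \bfx^{(j)}\| = L_{\calZ}\,\|\alpha^{(j)}\bfs^{(j)}\|,
\end{equation*}
and sending $j \to \infty$ gives $\bfy^{(j+1)} - \bfy^{(j)} \to \mathbf{0}$.

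The main obstacle is conceptually modest but worth stating carefully: one needs to justify that $h$ is bounded below on the relevant set so that the telescoping sum does not diverge toward $-\infty$, and one needs to ensure that the descent inequality \eqref{equation:alpha_comp}, which is stated for $\varphi$ alone, lifts correctly to $h = \varphi + r$. Both are handled cleanly by pairing the optimality of the $\bfy$-update with the fact that $r(\bfy^{(j)})$ cancels on both sides of the comparison that introduces $\epsilon_j$, so no hidden regularity of $r$ beyond convexity is required.
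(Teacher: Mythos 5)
Your proof is correct and takes essentially the same route as the paper: the paper gets the first limit from the summable-error descent condition \eqref{equation:alpha_comp} together with \eqref{equation:eps} (deferring the telescoping argument you write out to the cited reference of Zhou et al.), and the second limit from the Lipschitz continuity of $\calZ$ on the level set, exactly as you do. One small point in your favor: your bound $\|\bfy^{(j+1)}-\bfy^{(j)}\| \le L_{\calZ}\,\|\alpha^{(j)}\bfs^{(j)}\|$ is the correct one, whereas the paper's displayed inequality writes $L_{\calZ}\|\bfs^{(j)}\|$ without the factor $\alpha^{(j)}$ --- a slip, since $\|\bfs^{(j)}\|$ alone need not vanish --- so your version quietly repairs it.
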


\begin{proof}

The statement that $\lim_{j \rightarrow \infty} \alpha^{(j)} \bfs^{(j)} = {\bf0}$ follows directly from \Cref{equation:eps}  and \Cref{equation:alpha_comp} as noted in \cite{zhou2013short}.   Furthermore, 
\begin{align} 
\lim_{j\rightarrow \infty} \norm{\bfy^{(j+1)} - \bfy^{(j)}} = \lim_{j\rightarrow \infty} \norm{\calZ(\bfx^{(j+1)}) - \calZ(\bfx^{(j)})} \leq \lim_{j\rightarrow \infty }L_{\calZ}\norm{\bfs^{(j)}} = 0. 
\end{align}
by \Cref{lemma:levelset} and \Cref{assumption:2}.
\end{proof}
\begin{lemma}\label{lemma:bndlma}
    
Let \Cref{assumption:1} and \Cref{assumption:2} hold.  If there is a constant $\tau > 0$ with
\begin{align}
    \norm{\bfg^j(\bfy^{(j)})} \geq \tau
\end{align} for all $j \geq 0$, then there exists a constant $R > 0$ such that 
\begin{align}
    \norm{\bfs^{(j)}} \leq R.
\end{align} 
\end{lemma}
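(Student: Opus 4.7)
The plan is to exploit the nonlinear conjugate gradient recursion
\[
\bfs^{(j+1)} = -\bfg^{(j+1)}(\bfy^{(j+1)}) + \beta^{(j+1)}_{\textnormal{PR}}\,\bfs^{(j)},
\]
which gives the elementary bound $\|\bfs^{(j+1)}\| \leq \|\bfg^{(j+1)}(\bfy^{(j+1)})\| + |\beta^{(j+1)}_{\textnormal{PR}}|\,\|\bfs^{(j)}\|$. Since \Cref{lemma:levelset} places the iterates in the bounded set $\calL_\epsilon(\bfx^0,\bfy^0)$, \Cref{assumption:2} supplies a uniform upper bound $M$ with $\|\bfg^{(j)}(\bfy^{(j)})\| \le M$. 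Thus it suffices to show that $|\beta^{(j+1)}_{\textnormal{PR}}|$ is eventually small enough to produce a contraction in the recursion.

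First, I would bound the numerator of the PR update. Writing $\bfx^{(j+1)}-\bfx^{(j)} = -\alpha^{(j)}\bfs^{(j)}$ and using Lipschitz continuity of $\calZ$ (\Cref{assumption:2}) together with $\bfy^{(j)} = \calZ(\bfx^{(j)})$ gives
\[
\|\bfy^{(j+1)} - \bfy^{(j)}\| \leq L_\calZ\,\|\alpha^{(j)}\bfs^{(j)}\|.
\]
Lipschitz continuity of $\bfg$ on $\calN$ then yields
\[
\|\bfg^{(j+1)}(\bfy^{(j+1)}) - \bfg^{(j)}(\bfy^{(j)})\| \leq L_\bfg\bigl(\|\bfx^{(j+1)}-\bfx^{(j)}\| + \|\bfy^{(j+1)}-\bfy^{(j)}\|\bigr) \leq L_\bfg(1+L_\calZ)\,\|\alpha^{(j)}\bfs^{(j)}\|.
\]
Combining with the PR formula \Cref{eq:PR}, applying Cauchy--Schwarz in the numerator, and using the assumed lower bound $\|\bfg^{(j)}(\bfy^{(j)})\| \geq \tau$ in the denominator, I obtain
\[
|\beta^{(j+1)}_{\textnormal{PR}}| \leq \frac{M\,L_\bfg(1+L_\calZ)\,\|\alpha^{(j)}\bfs^{(j)}\|}{\tau^2}.
\]

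By \Cref{lemma:convlema}, $\alpha^{(j)}\bfs^{(j)} \to \bf{0}$, so the right-hand side above tends to zero. Consequently there exists an index $J$ such that $|\beta^{(j+1)}_{\textnormal{PR}}| \leq \tfrac{1}{2}$ for all $j \geq J$. For those indices, the recursion gives $\|\bfs^{(j+1)}\| \leq M + \tfrac{1}{2}\|\bfs^{(j)}\|$, which inductively forces $\|\bfs^{(j)}\| \leq \max\{\|\bfs^{(J)}\|,\,2M\}$. The finitely many earlier iterates $\bfs^{(0)},\dots,\bfs^{(J)}$ are automatically bounded, so I set $R$ to be the maximum of this finite list together with $2M$ to obtain the uniform bound.

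There is no serious analytical obstacle; the argument is essentially a careful bookkeeping exercise. The only subtlety I will watch for is the use of \Cref{assumption:2}'s product-space Lipschitz norm — I will use $\|(\bfa,\bfb)\| \leq \|\bfa\| + \|\bfb\|$ to split the bound into $\bfx$ and $\bfy$ contributions so that both reduce to multiples of $\|\alpha^{(j)}\bfs^{(j)}\|$. The hypothesis $\|\bfg^{(j)}(\bfy^{(j)})\| \geq \tau$ is critical in two ways: it keeps the PR denominator bounded away from zero, and it ensures the analysis is meaningful (if the gradient vanished, convergence would already be established).
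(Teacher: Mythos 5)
Your proof is correct and takes essentially the same route as the paper: you derive the same Cauchy--Schwarz/Lipschitz bound $|\beta^{(j+1)}_{\textnormal{PR}}| \leq M L_\bfg (1+L_\calZ)\|\alpha^{(j)}\bfs^{(j)}\|/\tau^2 \rightarrow 0$ (cf.\ \Cref{eq:betabound}) and then explicitly carry out the eventual-contraction induction that the paper omits by citing \cite[Proof of Lemma 2]{zhou2013short}. As a minor point in your favor, your bound correctly retains the step-size factor $\alpha^{(j)}$, which the paper's displayed inequality drops.
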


\begin{proof}
By \Cref{assumption:2},

\begin{align}\label{eq:betabound}
    |\beta^{PR}_{j+1}| \leq \frac{\norm{\bfg^{j+1}(\bfy^{(j+1)})}\norm{\bfg^{j+1}(\bfy^{(j+1)}) - \bfg^j({\bfy^{(j)}})}}{\norm{{\bfg^j({\bfy^{(j)}})}}^2} \leq \frac{ML_\bfg(\norm{\bfs^{(j)}} + \norm{\bfy^{(j+1)}- \bfy^{(j)}})}{\tau^2} \rightarrow 0,
\end{align} as $j \rightarrow \infty$. From here, it follows as given in \cite[Proof of Lemma 2]{zhou2013short}, and so the remainder of the proof is omitted. 
\end{proof}

\Cref{lemma:bndlma} is crucial in the convergence proof given in \Cref{thm:convergence} since it ensures that $\norm{\bfs^{(j)}}$ and $\beta^{(j)}$ are bounded if $\bfg^j$ is bounded away from $0$.  \Cref{thm:convergence} is proved by contradiction, using this implication. 

\begin{theorem}\label{thm:convergence}
    Let \Cref{assumption:1} and \Cref{assumption:2} hold, and let $\{\bfx^{(j)}, \bfy^{(j)}\}$ be the sequence generated by \Cref{alg:vpal} with step size selection satisfying \Cref{equation:alpha_comp}, and $\beta^{(j)}$ updates computed according to \Cref{eq:PR}.  Then $\{\bfx^{(j)}, \bfy^{(j)}\}$  has a subsequence converging to a stationary point.

\end{theorem}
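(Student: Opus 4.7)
The plan is to argue by contradiction, paralleling the convergence strategy of \cite{zhou2013short} for Polak--Ribi\`ere nonlinear conjugate gradient methods, now adapted to the variable-projected setting via \Cref{lemma:levelset}, \Cref{lemma:convlema}, and \Cref{lemma:bndlma}. Suppose, for contradiction, that no subsequence of $(\bfx^{(j)}, \bfy^{(j)})$ converges to a stationary point. By \Cref{lemma:levelset} the iterates are confined to the bounded set $\calL_\epsilon(\bfx^0, \bfy^0)$, so Bolzano--Weierstrass guarantees cluster points; failure of stationarity at every cluster point, together with continuity of $\bfg$ and closedness of $\partial r$, forces (after passing to a subsequence) the existence of $\tau > 0$ with $\norm{\bfg^j(\bfy^{(j)})} \geq \tau$ for all $j$. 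This activates \Cref{lemma:bndlma}, yielding the uniform bound $\norm{\bfs^{(j)}} \leq R$.

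Next I would chain two consequences of this bound. Inspecting \eqref{eq:betabound} inside the proof of \Cref{lemma:bndlma} together with \Cref{lemma:convlema} gives $|\beta^{(j+1)}| \to 0$. Substituting into the recursion $\bfs^{(j+1)} = -\bfg^{(j+1)}(\bfy^{(j+1)}) + \beta^{(j+1)} \bfs^{(j)}$ and invoking $\norm{\bfg^j(\bfy^{(j)})} \geq \tau$ forces $\liminf_j \norm{\bfs^{(j)}} \geq \tau$. Coupled with \Cref{lemma:convlema}'s statement that $\alpha^{(j)} \bfs^{(j)} \to {\bf0}$, this yields $\alpha^{(j)} \to 0$. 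Consequently, the backtracking rule eventually produces $\alpha^{(j)} = \rho^{k_j}$ with $k_j \geq 1$, so the next-larger trial step $\alpha^{(j)}/\rho$ must have violated \eqref{equation:alpha_comp}. Writing out the failed inequality, applying the mean-value theorem with the Lipschitz gradient from \Cref{assumption:2}, dividing by $\alpha^{(j)}/\rho$, and passing to the limit give $\limsup_j \bfg^j(\bfy^{(j)})^\top \bfs^{(j)} \geq 0$; but the same diminishing-$\beta$ observation implies $\bfg^j(\bfy^{(j)})^\top \bfs^{(j)} \to -\norm{\bfg^j(\bfy^{(j)})}^2 \leq -\tau^2$, the required contradiction.

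The step I expect to be the main obstacle is making the asymptotic sufficient-descent argument rigorous in the presence of the non-smooth composition $r \circ \calZ$. The saving grace is that the Armijo-type condition \eqref{equation:alpha_comp} is imposed on $\varphi$ alone, on which \Cref{assumption:2} supplies a Lipschitz gradient, so the mean-value-theorem step is valid; the $\bfy$-side is controlled separately by the inequality chain used in the proof of \Cref{lemma:levelset}, which leverages the optimality of $\bfy^{(j+1)} = \calZ(\bfx^{(j+1)})$ to keep the composite objective $h$ from increasing by more than $\epsilon_j$. A second subtlety is that the plain Polak--Ribi\`ere update (without a PR$_+$ restart) does not enforce sufficient descent at every iterate, but the diminishing-$\beta$ argument above shows that $\bfs^{(j)}$ asymptotically coincides with the negative gradient, delivering sufficient descent in the limit---exactly what is needed to close the \emph{subsequence} statement of the theorem.
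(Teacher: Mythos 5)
Your proposal is correct and takes essentially the same route as the paper's proof: a contradiction argument in the style of \cite{zhou2013short} built on \Cref{lemma:levelset}, \Cref{lemma:convlema}, and \Cref{lemma:bndlma}, using the Polak--Ribi\'ere bound \Cref{eq:betabound} and the mean-value analysis of the failed backtracking trial step $\alpha^{(j)}/\rho$. The only difference is organizational--where the paper splits into the two cases $\liminf_j \norm{\bfs^{(j)}} = 0$ versus $\liminf_j \norm{\bfs^{(j)}} > 0$ with $\alpha^{(j)} \to 0$, you eliminate the first case up front by showing $\beta^{(j)} \to 0$ forces $\liminf_j \norm{\bfs^{(j)}} \geq \tau$--and you helpfully spell out steps the paper defers to \cite{zhou2013short}, including the closing contradiction $\limsup_j \bfg^{j\top}\bfs^{(j)} \leq -\tau^2 < 0$ and the compactness/closedness argument upgrading $\liminf_j \norm{\bfg^{j}(\bfy^{(j)})} = 0$ to a subsequence converging to a stationary point.
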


\begin{proof}
    We first note that selection of each $\bfy^{(j)}$ is optimal, and thus \begin{align}
       \textbf{0} \in  \nabla_\bfy \varphi(\bfx^{(j)}, \bfy^{(j)}) +  \partial_\bfy r(\bfy^{(j)}),
    \end{align}
    for all $j$.  Hence it is sufficient to show 
    \begin{align}\label{eq:xcondition}
        \liminf_{j \rightarrow \infty} \norm{\nabla_{\bfx}\varphi(\bfx^{(j)}, \bfy^{(j)})} = \liminf_{j \rightarrow \infty}\norm{\bfg^{j}(\bfy^{(j)})} = 0.
    \end{align} 
     Following \cite{zhou2013short}, we reduce to two cases and proceed by contradiction.  We first note that by \Cref{lemma:convlema} we have $\lim_{j \rightarrow \infty} \norm{\alpha^{(j)} \bfs^{(j)}} = 0$ which implies either $\liminf_{j \rightarrow \infty} \norm{\bfs^{(j)}} = 0$ or  $\liminf_{j \rightarrow \infty} \norm{\bfs^{(j)}} > 0$  and $\lim_{j \rightarrow \infty} \alpha^{(j)} = 0$.

    \begin{enumerate}
        \item In the first case, \begin{align}\label{eq:case1}
            \liminf_{j \rightarrow \infty} \norm{\bfg^{j}(\bfy^{(j)})} = \liminf_{j \rightarrow \infty} \norm{\bfs^{(j)} - \beta^{(j)} \bfs^{(j-1)}} \leq \liminf_{j \rightarrow \infty} \norm{\bfs^{(j)}} + \liminf_{j \rightarrow \infty}\norm{\beta^{(j)} \bfs^{(j-1)}}
        \end{align}
        By way of contradiction, we assume that $\liminf_{j \rightarrow_\infty} \norm{\bfg^{j}(\bfy^{(j)})} \neq 0$ which implies that there exists $\tau > 0$ for which $\norm{\bfg^j(\bfy^{(j)})} \geq \tau $ for all $j$ and thus \Cref{lemma:bndlma} holds. By \Cref{eq:betabound} of \Cref{lemma:bndlma}, $\beta^{(j)} \not\rightarrow \infty$, and thus \Cref{eq:case1} goes to $0$, which contradicts our assumption.

        \item In the second case, we have that $ \lim_{j \rightarrow \infty} \alpha^{(j)} = 0$, and by definition of $\alpha^{(j)}$, we have that $\tilde\alpha^{(j)} = \alpha^{(j)}/\rho$ does not satisfy our step size criteria.  In particular, for all $j \geq 0$, we have that

        \begin{align}
            \varphi(\bfx^{(j)} + \tilde\alpha^{(j)}\bfs^{(j)}, \bfy^{(j)}) >   \varphi(\bfx^{(j)}, \bfy^{(j)}) - \delta \norm{\tilde\alpha^{(j)}\bfs^{(j)}}^2,
        \end{align}

        which gives 
        \begin{align}
            \frac{\varphi(\bfx^{(j)} + \tilde\alpha^{(j)}\bfs^{(j)}, \bfy^{(j)}) - \varphi(\bfx^{(j)}, \bfy^{(j)})}{\tilde\alpha^{(j)}} \geq -\delta \tilde\alpha^{(j)} \norm{\bfs^{(j)}}^2.
        \end{align}

        Since this is true for any $j$, the remainder of the proof follows as in \cite[Proof of Theorem 1]{zhou2013short} where the mean-value theorem, and \Cref{lemma:bndlma} are used to derive a contradiction.
    \end{enumerate}
\end{proof}

\Cref{thm:convergence} guarantees convergence (in the $\liminf$ sense) of our algorithm with relatively loose assumptions on the objective function.  With some restrictions on the setup of the EEG source localization problem, we can show that \Cref{equation:objective} satisfies \Cref{assumption:1} and \Cref{assumption:2} therefore establishing convergence for our specific problem.

\begin{theorem}
    If $\bfA = \begin{bmatrix}
    \bfI_T \otimes \bfL \\ \lambda(\bfD_1^\top \otimes \bfI_n)
\end{bmatrix}$ has full column rank, then $f_{\rm joint}(\bfx, \bfy)$ as defined in \Cref{eq:fjoint} is strictly convex and satisfies \Cref{assumption:1} and \Cref{assumption:2} for any initial point, $(\bfx^0, \bfy^0)$.
\end{theorem}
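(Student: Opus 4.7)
The plan is to decompose $f_{\rm joint}$ as $\varphi(\bfx,\bfy) + r(\bfy)$, where $\varphi(\bfx,\bfy) = \tfrac{1}{2}\norm[2]{\bfA\bfx - \bfw}^2 + \tfrac{\eta^2}{2}\norm[2]{\tilde\bfD_2\bfx - \bfy + \bfc}^2$ is smooth and $r(\bfy) = \mu\norm[1]{\bfy}$ is convex, and then verify the three claims---strict convexity, \Cref{assumption:1}, and \Cref{assumption:2}---one at a time.

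For strict convexity, I would compute the (constant) Hessian of $\varphi$ in block form,
\[
H = \begin{bmatrix} \bfA^\top\bfA + \eta^2\tilde\bfD_2^\top\tilde\bfD_2 & -\eta^2\tilde\bfD_2^\top \\ -\eta^2\tilde\bfD_2 & \eta^2\bfI \end{bmatrix},
\]
and apply the Schur-complement test with respect to the positive-definite lower-right block $\eta^2\bfI$. That Schur complement telescopes to $\bfA^\top\bfA$, which is positive definite precisely because $\bfA$ has full column rank. Hence $H \succ 0$ on all of $\bbR^{n_x+n_y}$, so $\varphi$ is strictly convex; adding the convex term $r$ preserves strict convexity of $h = \varphi + r$, and therefore of $f_{\rm joint}$.

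For \Cref{assumption:1}, I would argue coercivity of $h$. Dropping the nonnegative middle term gives $h(\bfx,\bfy) \geq \tfrac{1}{2}\norm[2]{\bfA\bfx - \bfw}^2 + \mu\norm[1]{\bfy}$, and the reverse triangle inequality combined with full column rank yields $\norm[2]{\bfA\bfx - \bfw} \geq \sigma_{\min}(\bfA)\norm[2]{\bfx} - \norm[2]{\bfw}$, so the first term is coercive in $\bfx$, while $\mu\norm[1]{\bfy}$ is coercive in $\bfy$. Splitting into the subcases $\norm[2]{\bfx}\to\infty$ and $\norm[2]{\bfx}$ bounded but $\norm[2]{\bfy}\to\infty$ shows $h\to\infty$ whenever $\norm[2]{(\bfx,\bfy)}\to\infty$. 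Every sublevel set $\calL_\epsilon(\bfx^0,\bfy^0)$ is therefore bounded, independently of the initial point.

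For \Cref{assumption:2}, the gradient $\nabla_\bfx\varphi(\bfx,\bfy) = (\bfA^\top\bfA + \eta^2\tilde\bfD_2^\top\tilde\bfD_2)\bfx - \eta^2\tilde\bfD_2^\top\bfy + \text{const.}$ is affine in $(\bfx,\bfy)$ and hence globally Lipschitz (with constant controlled by the operator norm of the matrix coefficient), which a fortiori gives Lipschitz continuity on any neighborhood of the level set. The closed form for $\calZ$ is already the soft-thresholding of $\tilde\bfD_2\bfx + \bfc$ from \Cref{eq:shrink}; since componentwise soft-thresholding is non-expansive and $\bfx \mapsto \tilde\bfD_2\bfx + \bfc$ is affine, $\calZ$ is globally Lipschitz with constant at most $\norm[2]{\tilde\bfD_2}$. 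The only substantive step is the Schur-complement verification of strict convexity, on which the other two pieces essentially rest; once full column rank of $\bfA$ is invoked to make $\bfA^\top\bfA$ positive definite, the coercivity argument and the affine/non-expansive Lipschitz bounds are routine.
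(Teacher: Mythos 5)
Your proof is correct, and while it shares the paper's skeleton---splitting $f_{\rm joint}$ into the smooth quadratic $\varphi$ plus $r(\bfy)=\mu\norm[1]{\bfy}$ and tracing everything back to full column rank of $\bfA$---the mechanics differ in three places, twice to your advantage. The paper proves strict convexity by rewriting $f_{\rm joint}$ as a single stacked least-squares term plus $\mu\norm[1]{\bfy}$, where the stacked block lower-triangular matrix (built from $\bfA$, $\tilde\bfD_2$, and a scaled identity) inherits full column rank from $\bfA$; your Schur-complement computation is the algebraic twin of this, since the Hessian $H$ of $\varphi$ is exactly the Gram matrix of that stacked matrix, and your telescoping of the Schur complement to $\bfA^\top\bfA$ verifies the same positive definiteness. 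Where you add value: first, the paper simply asserts coercivity from the strictly convex quadratic-plus-$\ell_1$ structure, while you prove it explicitly via the reverse triangle inequality with $\sigma_{\min}(\bfA)>0$ and the case split on $\norm[2]{\bfx}$ versus $\norm[2]{\bfy}$; second, and more substantively, the paper deduces Lipschitz continuity of $\bfZ(\cdot)$ from continuity together with compactness of the lower level set, an inference that is not valid in general (continuity on a compact set yields only uniform continuity, not a Lipschitz bound), whereas your observation that $\calZ$ is the componentwise non-expansive soft-thresholding map composed with the affine map $\bfx\mapsto\tilde\bfD_2\bfx+\bfc$ gives a correct global Lipschitz constant $L_\calZ \leq \norm[2]{\tilde\bfD_2}$, repairing a small logical gap in the paper's own argument while reaching the same conclusion. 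Likewise your affine-gradient bound for $\nabla_\bfx\varphi$ is global rather than confined to a neighborhood $\calN$ of the level set, which is more than \Cref{assumption:2} requires. In short: same theorem, same essential use of the rank hypothesis, but your version is more self-contained and, on the $\calZ$ Lipschitz step, more rigorous; the paper's version is more compact.
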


\begin{proof}
    Writing \Cref{eq:fjoint} as 
    \begin{align}
        f_{\rm joint}(\bfx, \bfy) = \norm[2]{\begin{bmatrix}
            \bfA & \bf0 \\
            \tilde{\bfD}_2 & -\lambda\bfI_{mT}
        \end{bmatrix}\begin{bmatrix}\bfx \\ \bfy\end{bmatrix} - \begin{bmatrix}
            \bfb \\ -\eta\bfc
        \end{bmatrix}}^2 + \mu \norm[1]{\bfy}
    \end{align}
we have that $\bfA$ full column rank implies that $\begin{bmatrix}
            \bfA & {\bf0} \\
            \tilde{\bfD}_2 & -\lambda\bfI_{mT}
\end{bmatrix}$ is full column rank. Hence, $f_{\rm joint}(\cdot, \cdot)$ is strictly convex, and coercive, therefore $\calL_\epsilon(\bfx^0, \bfy^0)$ is compact for any $(\bfx^0, \bfy^0)$ satisfying \Cref{assumption:1}].  The gradient of $f_{\rm joint}(\bfx, \bfy)$ with respect to $\bfx$ is affine, and thus Lipschitz on the compact lower level set.  The continuity of $\bfZ(\cdot)$ and compactness of the lower level set implies that $\bfZ(\cdot)$ is also Lipschitz continuous and \Cref{assumption:2} holds. 
\end{proof}

\section{Numerical Experiments}\label{section:numExp}

In this section, we investigate EEG experiments conducted on simulated data.  We first introduce our sequential \emph{windowed} variation of {\tt VPAL} in \Cref{section:window}. A simulation study is introduced in \Cref{sub:data} followed by a comparison study in \Cref{section:comparison_study}. We evaluate the scalability of our algorithms to larger datasets in \Cref{section:scalability}. A discussion about hyperparameter selection is included in \Cref{section:paramselection}.

\subsection{A Windowed Approach}\label{section:window}
We will illustrate the performance of {\tt VPAL} compared to {\tt ADMM} and {\tt FISTA} in \Cref{section:comparison_study}.  However, {\tt ADMM}, {\tt FISTA}, and {\tt VPAL} compute the entire dynamics simultaneously, limiting real-time potential. 
 We now discuss how to use a \emph{moving window} approach based on {\tt VPAL} to instead compute later time points sequentially, where a \emph{window} refers to a collection of time points in sequence, i.e. $W_j = \{t_j, t_{j+1}, \dots, t_{j+w}\}$. Heuristically, this approach first computes the reconstruction for a window and then uses this reconstruction as an initial guess to compute the reconstruction for the next window.  If the windows are small enough, the previous reconstruction should be a good initialization and facilitate fast accurate convergence.

We can frame this mathematically as follows: assume we are given a solution $\bfX^{(W_{1})} = [\bfx_{1}, \bfx_{2}, \dots, \bfx_{w+1}]$ using data $\bfB^{(W_{1})} = [\bfb_1,\bfb_2, \ldots,\bfb_{w+1}]$. Moving to later windows of time, $W^{2}, W^{3}, \dots, W^{J}$, the reconstruction generated from the previous window can be used to initialize the estimation of the current time window.  Implementation of this algorithm, denoted as {\tt VPAL$_{\tt W}$},   is summarized in \Cref{alg:vpalWindowed}.

\begin{algorithm}[hbt!]
\caption{{\tt VPAL$_{\tt W}$}}\label{alg:vpalWindowed}
    \begin{algorithmic}[1]
    \State \textbf{input} $\bfL$, $\bfB$, $\{W_j\}_{j = 1}^J$

    \State $\bfX^{(W_1)}$ = {\tt VPAL}($\bfL$, $\bfB^{(W_1)}$)\label{ln:initalvpalw}

    \For{$j = 2$ to $J$}

        \State $\bfX^{(W_j)} = $ {\tt VPAL}($\bfL$, $\bfB^{(W_j)})$ with initial guess $\bfX_{0} = \bfX^{(W_{j-1})}$\label{ln:innerloopvpalw}
    \EndFor
    \State \textbf{output} $\bfX$ 
    \end{algorithmic}
\end{algorithm}

In our given implementation, the windows need not be disjoint. To ensure smooth transition between reconstructions at adjacent windows, and thus maintain time dependencies,  we opt to have the last time point in the previous window overlap with the first time point in the current window.  Furthermore, the windows need not be the same length, so long as the initial guess is sliced so it is an appropriate size to initialize the next window.  For simplicity, we use a configuration where all windows are of the same length and we reconstruct one new time point in each window--that is 
\begin{align}
    W^k = \{t_j, t_{j + 1}, \dots t_{j + w}\} \textnormal{ and } W^{k+1} = \{t_{j+w}, t_{j + w+ 1}, \dots t_{j + 2w}\},
\end{align}
 where we set $w = 1$.

The hyperparameters used for {\tt VPAL} calls in \Cref{ln:initalvpalw} and \Cref{ln:innerloopvpalw} can vary.  In our study, \Cref{ln:innerloopvpalw} was run for significantly fewer iterations than \Cref{ln:initalvpalw}.  Due to the initialization using the previous window, this was found to be sufficient for quick high-fidelity reconstructions at each additional window.  It is important to note that a similar moving window approach could be implemented analogously using {\tt ADMM} or {\tt FISTA} for reconstruction of time points.  However as will be made clear in \Cref{section:comparison_study}, {\tt VPAL} is significantly faster and thus the only algorithm suitable for real-time reconstructions.

\subsection{EEG Dataset}\label{sub:data}

To validate our methods, we set up simulation studies leveraging the ICBM-NY dataset \cite{HAUFE2008726, NYHEAD} which includes highly detailed models of the cortical mesh with different resolutions.  We assess the performance of our novel {\tt VPAL} and {\tt VPAL$_{\tt W}$} algorithms on three meshes with varying sizes $n =  2,\!004, 10,\!016$, and $74,\!382$ nodes and only $p = 231$ electrodes. We compare these algorithms to {\tt ADMM} and {\tt FISTA} on a small- and medium-sized mesh to establish a baseline. To showcase its ability to handle large-scale datasets and investigate their scalability, we apply our methods to a large mesh, where traditional algorithms like {\tt ADMM} and {\tt FISTA} become computationally prohibitive.

To establish a controlled experimental environment, we first generated a simplified, small dataset characterized by a single active brain region.  Here, a random selection of nodes is activated and their influence propagates to neighbors smoothly over time. We represent the voltage on the mesh by the matrix $\bfX_{\text{true}}$.  Simulated observations $\bfB$ are generated according to the model $\bfB = \bfL\bfX_{\text{true}}+ \bfE$, where $\bfE$ is an additive 10\% Gaussian white noise realization to mimic measurement noise. 
Concerning the small dataset, the comparison dataset is generated by simulating a single-source propagation on the mesh for 20 time steps, resulting in $\bfX_{\textnormal{true}}^{\textnormal{C}} \in \bbR^{2,004 \times 20}$.

To compare how our methods perform as the size of the data and model increases, we generated additional datasets,  $\bfX_{\textnormal{true}}^{\textnormal{S}} \in \bbR^{2,004 \times 50}$, $\bfX_{\textnormal{true}}^{\textnormal{M}} \in \bbR^{10,016 \times 50}$ and $\bfX_{\textnormal{true}}^{\textnormal{L}} \in \bbR^{74,382 \times 50}$.  As algorithmic runtime varies across datasets, we generated three different versions of each dataset using different random seeds.  Runtime results and performance metrics were averaged across all three datasets.

An additional 10 test sets were created to evaluate the scalability of the proposed methods with respect to the number of time points, $T$.  These datasets, $\bfX^{\textnormal{S}_T}_{\textnormal{true}} \in \bbR^{2,004 \times T}$ for $T = 10, 20, \dots, 100$ were generated via the same procedure before, and contain multiple activation sites.  Again, due to variance in runtime, we generated three unique datasets for each size and reported the average runtime. 

All experiments were conducted on a 2019 6-Core Macbook Pro with 16 GB of DDR4 RAM. \emph{To ensure reproducibility, the elementary code will be released upon acceptance.}

\subsection{Comparison Study}\label{section:comparison_study}

To analyze the properties of the algorithms we introduced, we compare {\tt VPAL} and {\tt VPAL$_{\tt W}$} to the standard methods for solving elastic net problems, i.e., {\tt ADMM} and {\tt FISTA} as well as {\tt sLORETA}.   To further ensure a fair comparison, each method's hyperparameter was selected through a comprehensive hyperparameter study. Details of the hyperparameter study are outlined in \Cref{section:paramselection}. Once the best parameters in the given range were identified the respective algorithm was run until standard termination criteria with tolerance of $10^{-5}$ were fulfilled \cite{doi:10.1137/1.9781611975604} or 1,000 iterations were reached. Additionally, we compare to {\tt sLORETA} with an approximately optimal regularization parameter with respect to relative error. 

To test \Cref{alg:vpalWindowed}, we consider the same dataset used in \Cref{section:comparison_study}, and perform all {\tt VPAL} runs using the same setup, except only allowing {\tt VPAL} calls in \Cref{ln:innerloopvpalw} run for $100$ iterations.  Empirically, there were enough iterations for the algorithm to produce accurate reconstructions while maintaining quick reconstructions for each time point.  The results are summarized in \Cref{fig:compresults}, \Cref{fig:compplots}, and \Cref{table:cres}.  Here PSNR is a common reconstruction measure and refers to the peak signal-to-noise ratio \cite{hore2010image}. We also record the reconstruction error, or relative error (RE) of each algorithm.  These are defined as

\begin{align}
    \textnormal{PSNR} = 10 \log_{10}\left(\frac{\max(\bfX)^2}{\norm[\fro]{\bfX-\bfX_{\textnormal{true}}}^2}\right)  \quad \textnormal{ and } \quad \textnormal{RE} = \frac{\norm[\fro]{\bfX-\bfX_{\textnormal{true}}}}{\norm[\fro]{\bfX_{\textnormal{true}}}}.
\end{align}

To gain insight into the structure of the reconstruction, we consider two additional metrics: source error distance (SED) and structural similarity index (SSIM).  SED measures the distance between the active source and the predicted active source in the reconstruction averaged across time points.  SSIM heuristically measures the human-perceived quality of an image compared to a reference image--see \cite{1284395} for further details.  Since $\ell_1$-regularization was implemented to impose sparsity on the finite difference of mesh values, we also report a sparsity measure defined as the number of non-zero elements in $\bfD_2\bfX$ divided by the number of elements of $\bfX$.  For simplicity, we refer to this as the sparsity of $\bfX$ and it should not be confused with the number of non-zero elements of $\bfX$.

\begin{figure}[hbt!]
    \centering
    \includegraphics[scale=0.3]{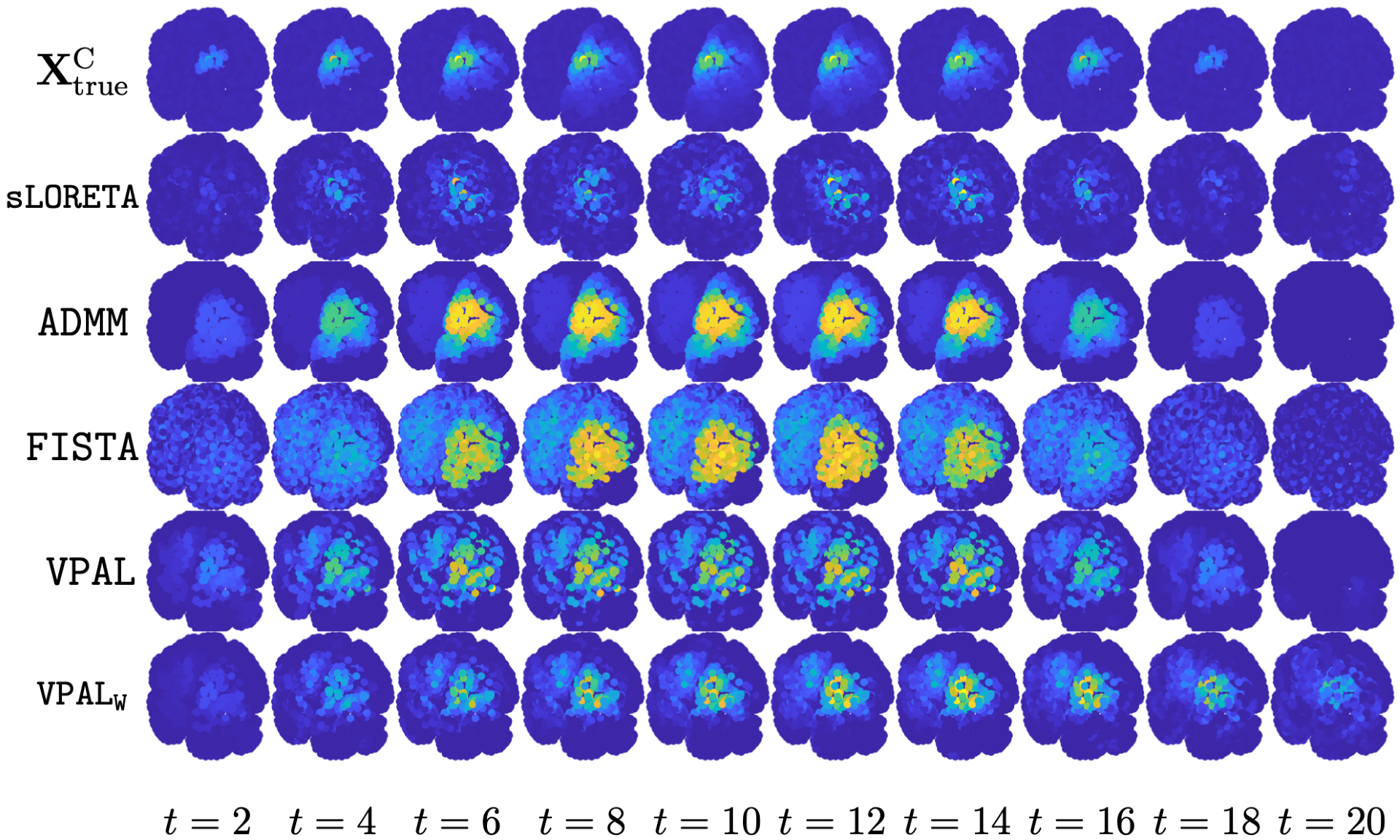}
    \caption{Visualization of {\tt sLORETA}, {\tt ADMM}, {\tt FISTA}, {\tt VPAL} and {\tt VPAL$_{\tt W}$}  reconstructions, compared against ground truth comparison dataset, $\bfX_{\textnormal{true}}^{\textnormal{C}}$. An approximately optimal regularization parameter, $\lambda = 0.0059$, was used for the {\tt sLORETA} reconstruction. }

    \label{fig:compresults}
\end{figure}

\begin{table}[hbt!]
    
    \centering
    
        \begin{tabular}{c||c|c|c|c|c|c}
         \textbf{Algorithm} & \textbf{Runtime (s)} & \textbf{PSNR} & \textbf{RE} & \textbf{SSIM} & \textbf{SED} & \textbf{Sparsity} \\
         \hline
         \hline
         {\tt sLORETA} &\textbf{0.33} & 22.53 & 1.046 & 0.298 & 28.44 & 0.437 \\
         \hline
         {\tt ADMM} & $333$  & \textbf{27.01} & \textbf{0.679} & \textbf{0.788} &  30.43 & \textbf{0.068}\\
         \hline
         {\tt FISTA} & $1,\!318$ & 24.67 &  0.892 &  0.618 & 34.05 & 0.491\\
         \hline
          {\tt VPAL} & 36 &  24.76 & 0.884 & 0.685 & 46.96 & 0.184\\
         \hline
         \small{\tt VPAL$_{\tt W}$} & 19 &  25.54 & 0.787 & 0.493  & \textbf{17.86} & 0.371\\
         \end{tabular} 
         \caption{Runtime and accuracy metrics of {\tt ADMM} {\tt FISTA} and {\tt sLORETA} against proposed {\tt VPAL} and {\tt VPAL$_{\tt W}$}.  The average runtime across 3 trials is presented in the table. Note, the sparsity of $\bfD_2\bfX_{\rm true}^{\rm C}$ is 0.452}.         \label{table:cres}
 \end{table}

\begin{figure}[htb!]
    \centering
    \includegraphics[width=0.4\linewidth]{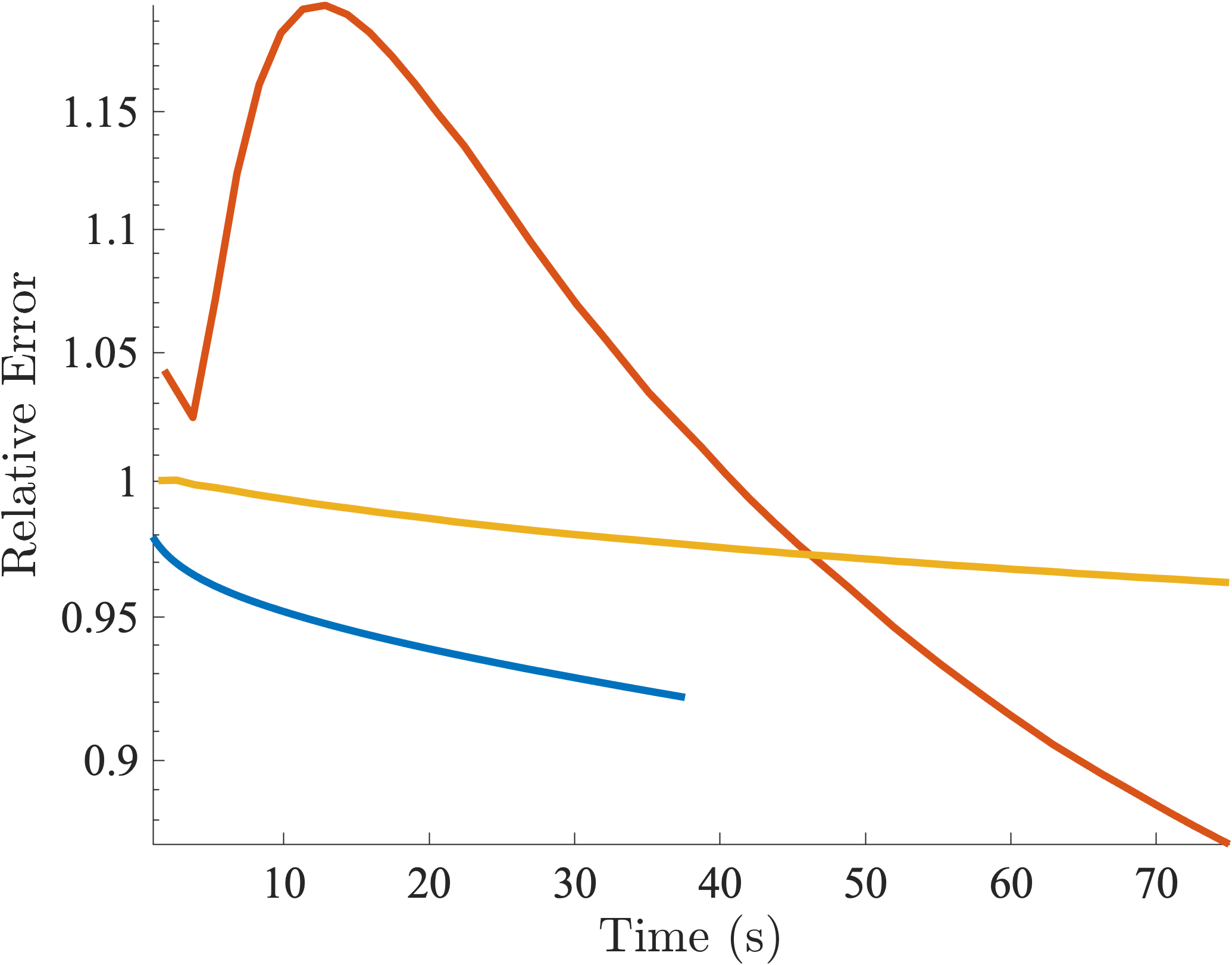}
        \includegraphics[width=0.4\linewidth]{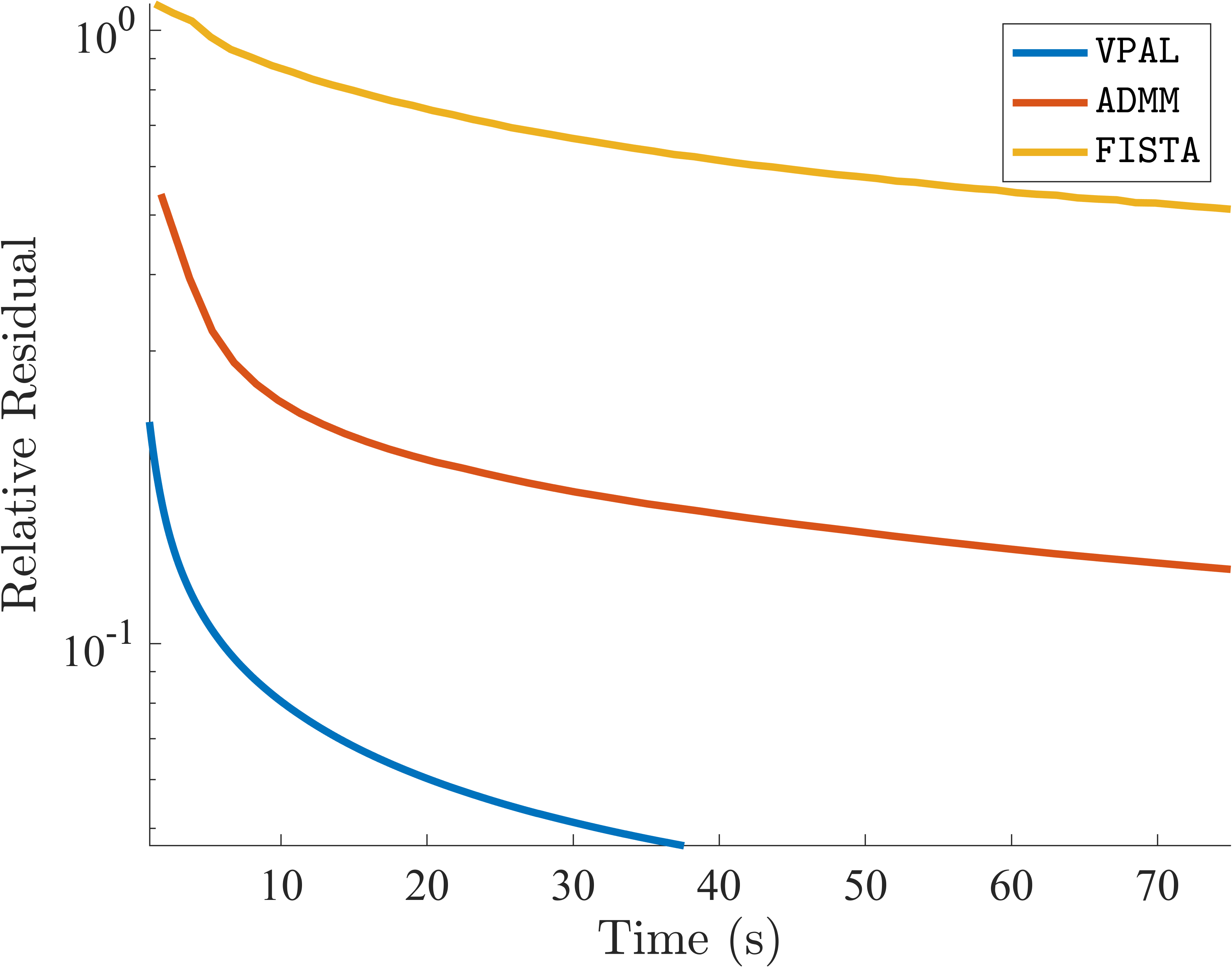}
    \caption{Relative error and residual plot against time for {\tt VPAL}, {\tt ADMM} and {\tt FISTA} (including the computations of residuals and errors in contrast to values reported in \Cref{table:cres}). }
    \label{fig:compplots}
\end{figure}

\subsection{Scalability Comparison}\label{section:scalability}
 
 To analyze the scalability of our algorithms to larger datasets with higher-resolutions and meshes, we compare the performance of {\tt VPAL} and {\tt VPAL$_{\tt W}$} to {\tt ADMM} on datasets, $\bfX_{\textnormal{true}}^{\textnormal{S}}$, $\bfX_{\textnormal{true}}^{\textnormal{M}}$.  Additionally, we present the results of {\tt VPAL} and {\tt VPAL$_{\tt W}$} on $\bfX^{\textnormal{L}}_\textnormal{true}$. An {\tt ADMM} reconstruction could not be computed for $\bfX^{\textnormal{L}}_\textnormal{true}$ since required computations became intractable. Note, we omit comparisons to {\tt FISTA} due to their inferior performance presented in \Cref{section:comparison_study} and {\tt sLORETA}, since the focus of this section is the relative performance of $\ell_1$-regularized methods.  Additionally, we only consider the PSNR and RE for accuracy since the primary focus of this section is on the runtime of our methods on large datasets. We performed reconstruction experiments on each dataset and compared the fidelity as well as the algorithm runtime after $500$ iterations or convergence.

\begin{figure}[hbt!]
    \centering
    \includegraphics[scale = 0.3]{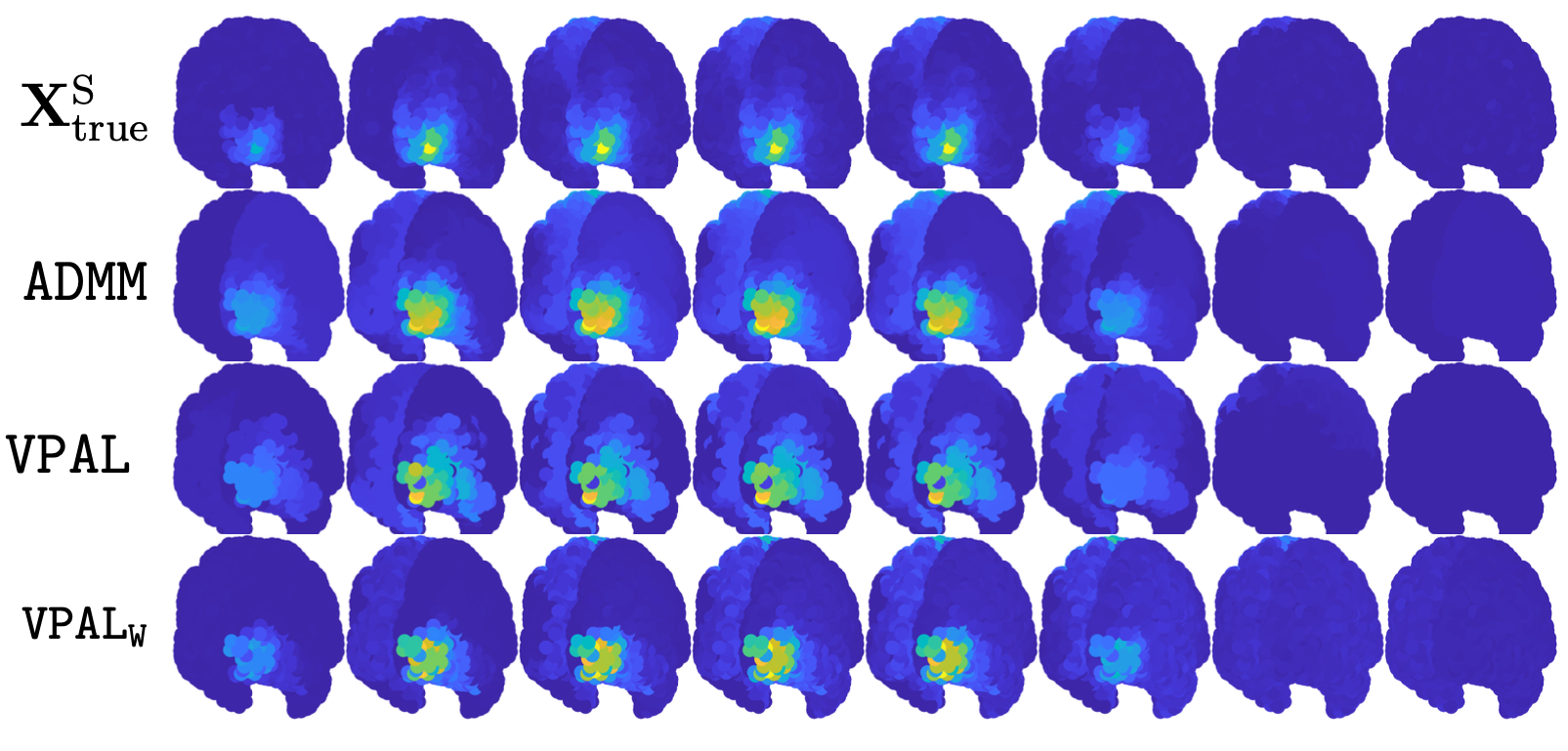}
    \quad

    \includegraphics[scale = 0.3]{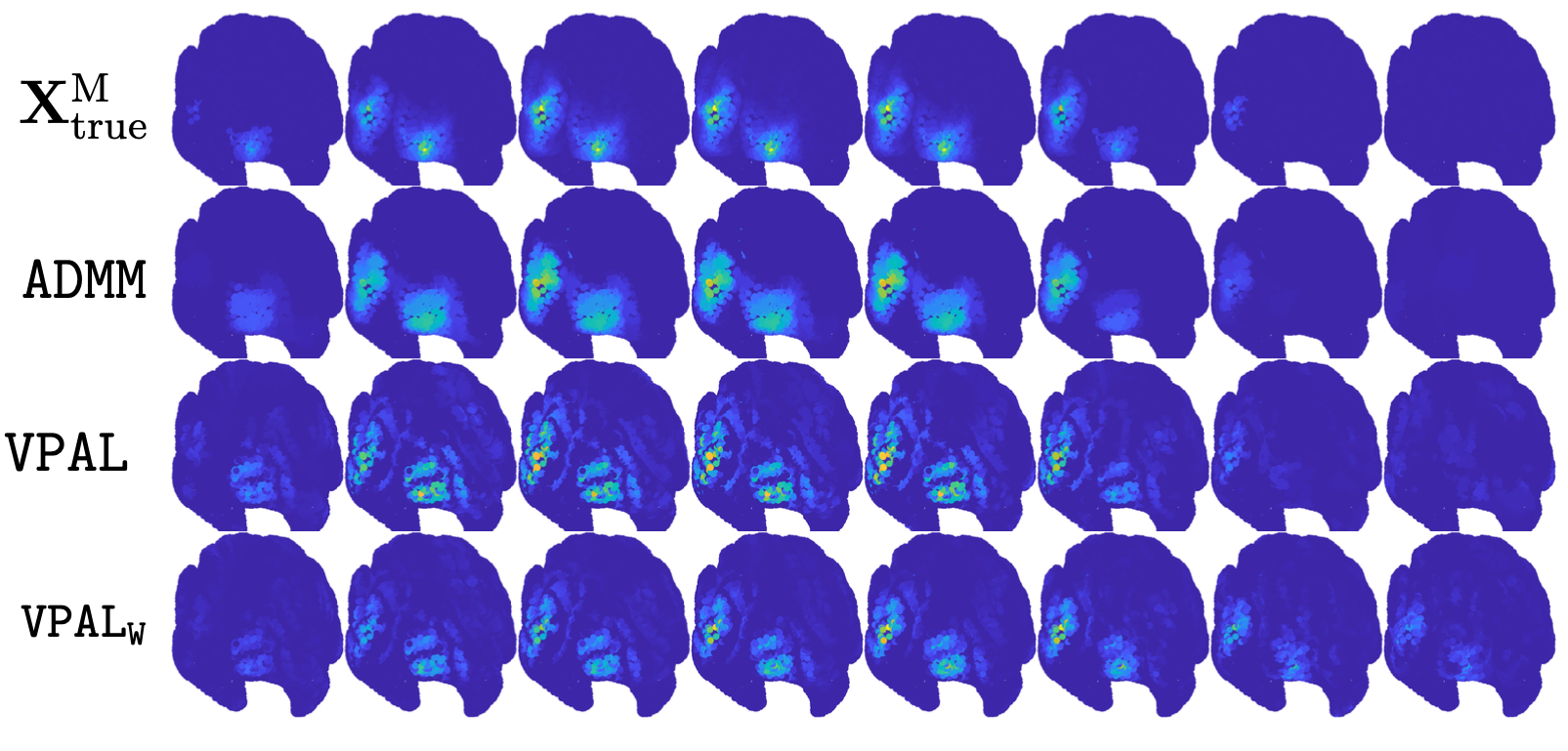}
    \quad

    \includegraphics[scale = 0.3]{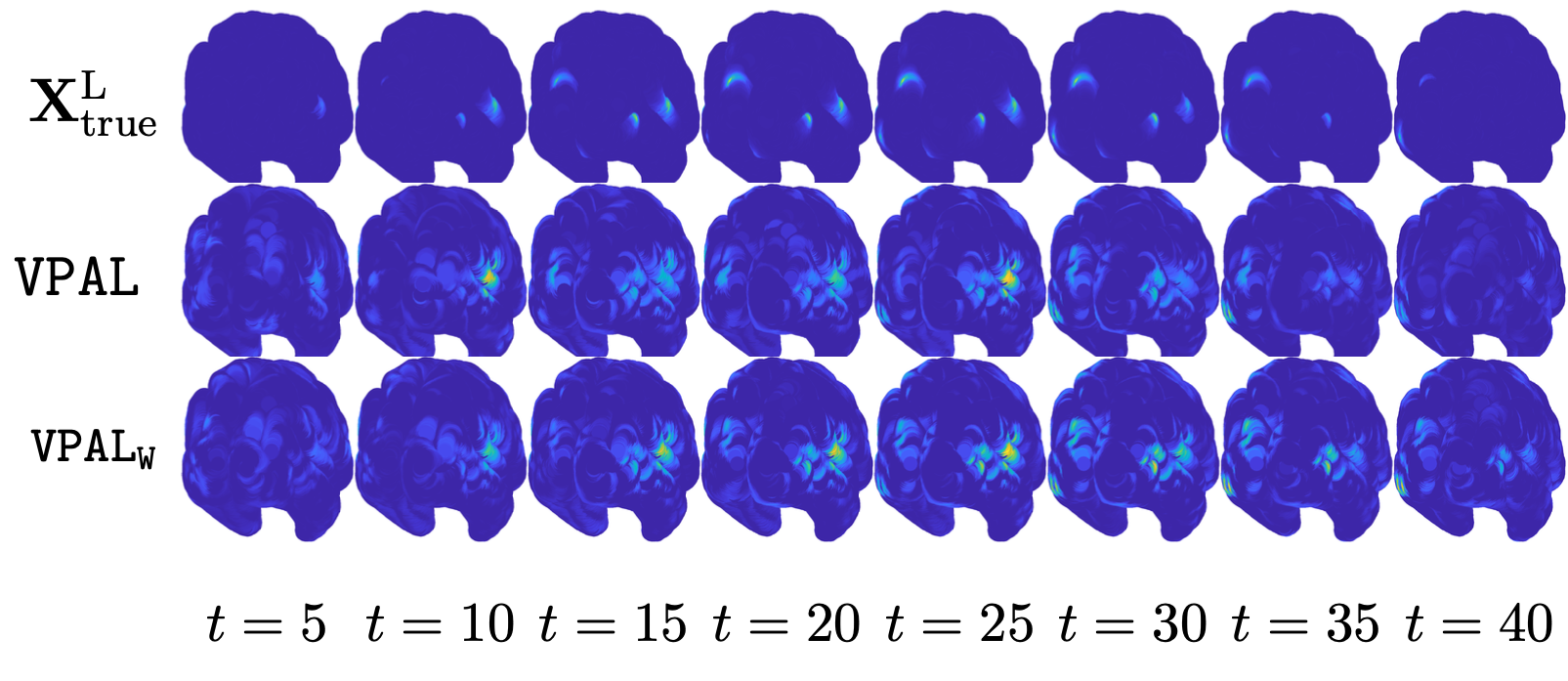}
    \caption{Visualization of {\tt ADMM}, {\tt FISTA}, {\tt VPAL} and {\tt VPAL$_{\tt W}$} reconstructions, compared against ground truth scalability datasets, $\bfX_{\textnormal{true}}^{\textnormal{S}}$, $\bfX_{\textnormal{true}}^{\textnormal{M}}$, $\bfX_{\textnormal{true}}^{\textnormal{L}}$.  Visualization is shown for the first of the three trial datasets.  Since most interesting dynamics occur before $t = 40$, later time points are not shown.}  
    \label{fig:compresults_scala}
\end{figure}

\begin{figure}[hbt!]
    \centering
    \includegraphics[width=0.5\linewidth]{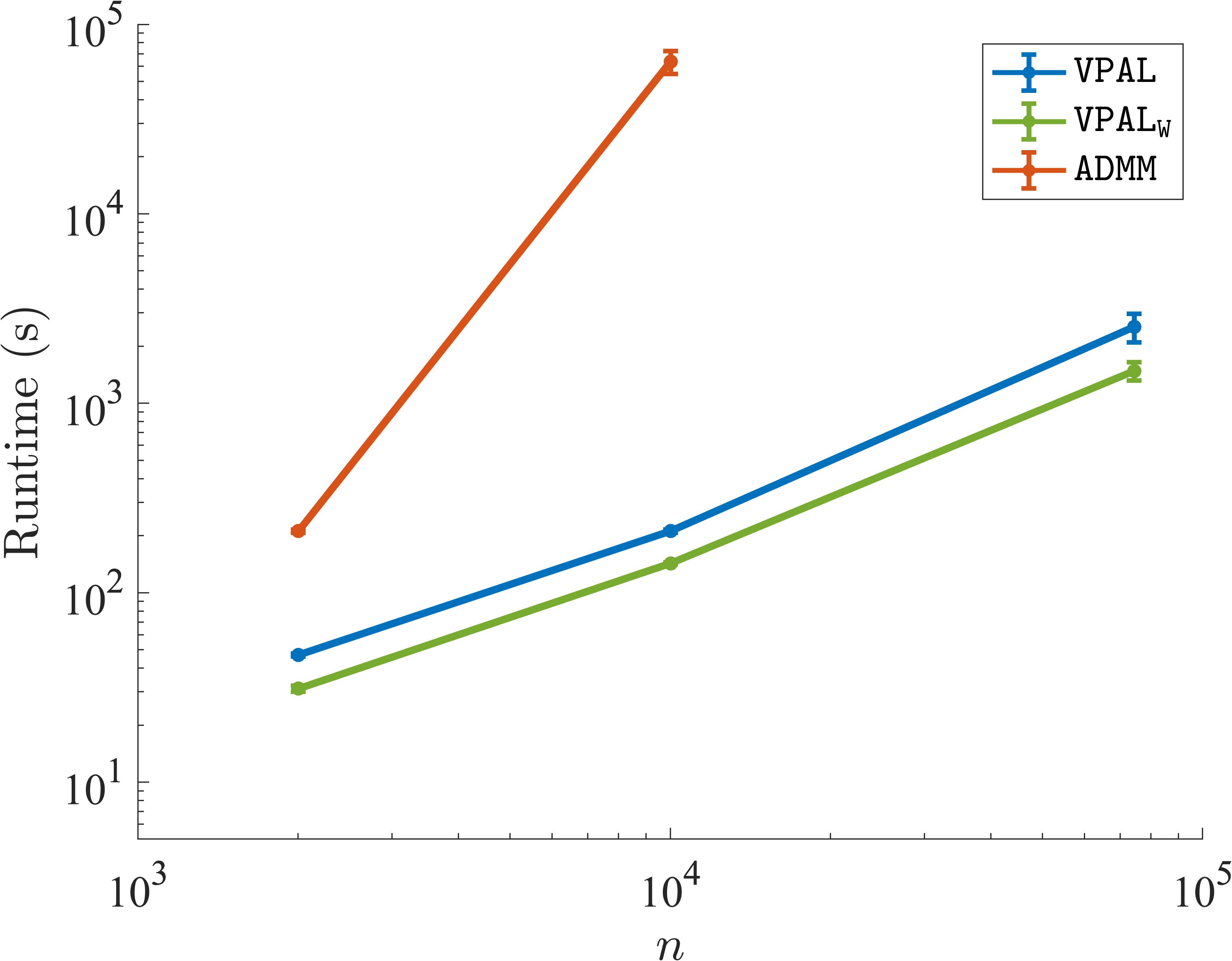} 
    \caption{ Average runtime comparison of {\tt ADMM}, {\tt VPAL} and {\tt VPAL$_{\tt W}$} as $n$ is increased.  Error bars correspond to the standard deviation of runtime across three unique datasets.  }
    \label{fig:scalaplot}   
\end{figure}

\begin{table}[H]
\centering
\footnotesize\begin{tabular}{c||c|c|c|c|c|c} 
  Method &  PSNR $\bfX^{\textnormal{S}}_{\textnormal{true}}$  &  RE $\bfX^{\textnormal{S}}_{\textnormal{true}}$ & PSNR $\bfX^{\textnormal{M}}_{\textnormal{true}}$ &  RE $\bfX^{\textnormal{M}}_{\textnormal{true}}$& PSNR  $\bfX^{\textnormal{L}}_{\textnormal{true}}$ & RE  $\bfX^{\textnormal{L}}_{\textnormal{true}}$\\ \hline\hline
  {\tt ADMM} & \textbf{19.772} \footnotesize{$\pm$0.545} & \textbf{0.681} \footnotesize{$\pm$0.025} & 23.442 \footnotesize{$\pm$2.330} & 1.170 \footnotesize{$\pm$0.447} & n/a & n/a\\
  {\tt VPAL} &  16.968 \footnotesize{$\pm$0.851}  & 0.940 \footnotesize{$\pm$0.004} &  24.784 \footnotesize{$\pm$1.90}  & \textbf{0.946} \footnotesize{$\pm$0.020} &\textbf{24.495} \footnotesize{$\pm$0.565}  &{\bf 0.987} \footnotesize{$\pm$0.005}\\ 
  \footnotesize{{\tt VPAL$_{\tt W}$}} &  18.325 \footnotesize{$\pm$0.682} & 0.804 \footnotesize{$\pm$0.032}&\textbf{25.165} \footnotesize{$\pm$1.686} &  0.9062 \footnotesize{$\pm$0.046}&  24.442 \footnotesize{$\pm$0.514} &  0.985 \footnotesize{$\pm$0.004}\\
\end{tabular}
    \caption{PSNR comparison of {\tt ADMM}, {\tt VPAL} and {\tt VPAL$_{\tt W}$} as $n$ is increased.  Metrics are averaged across all three trial datasets and standard deviation is reported in parenthesis.} 
    \label{tab:my_label}
\end{table}

Following the same procedure, we measured the runtime of {\tt VPAL} and {\tt VPAL$_{\tt W}$} on a series of datasets where $T$ is varied from $10$ to $100$ in increments of $10$.  The results of this experiment can be seen in \Cref{fig:time_scaling}.  Results in \Cref{fig:scalaplot} are averaged over 3 runs, with different datasets for each.

\begin{figure}[hbt!]
    \centering \includegraphics[width=0.5\linewidth]{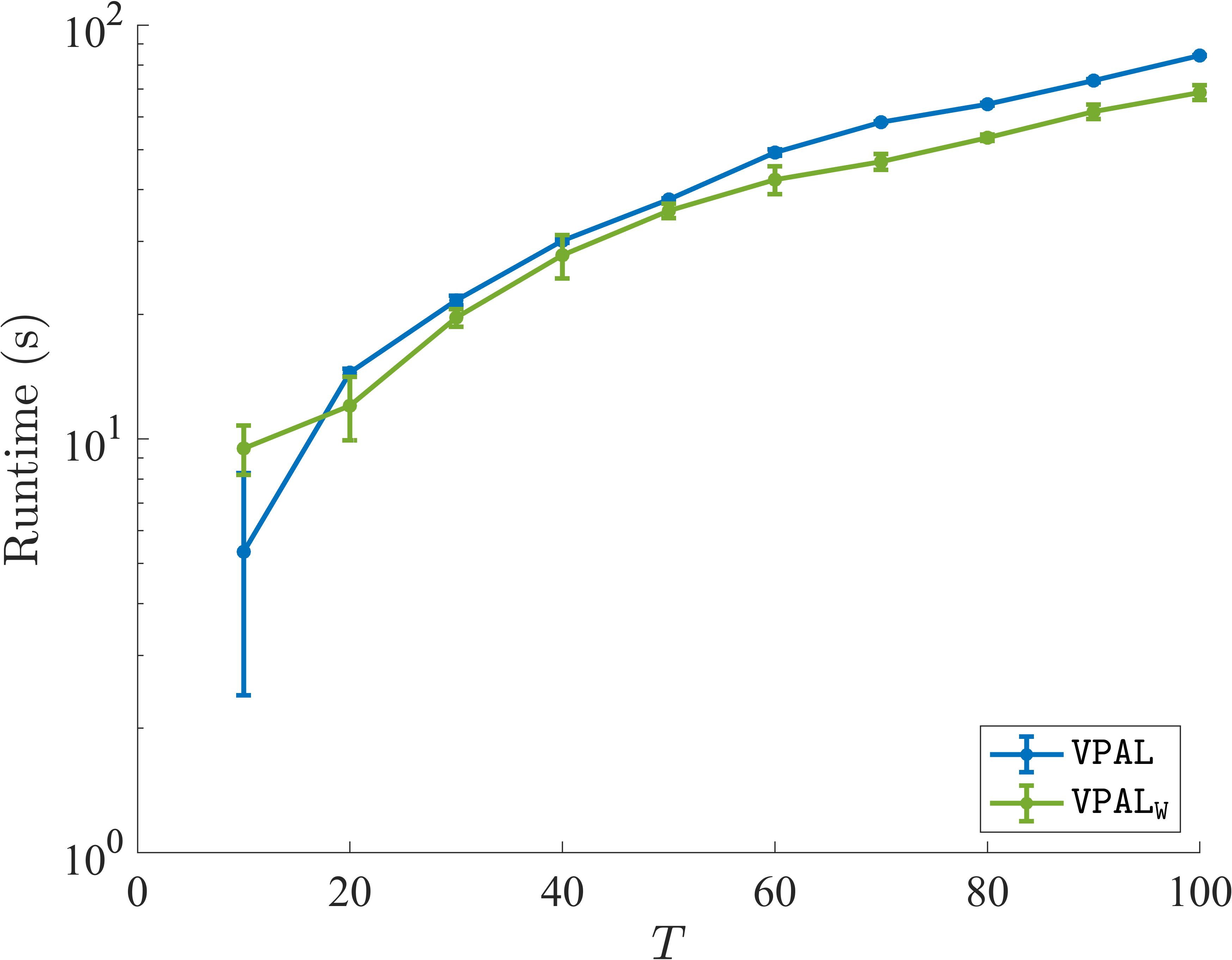}
    \caption{Average runtime comparision of {\tt VPAL} and {\tt VPAL$_{\tt W}$} for $T = 10, 20, \dots 100$.  Average runtime across three unique datasets is plotted, with error bars corresponding to the standard deviation.}
    \label{fig:time_scaling}
\end{figure}

\section{Discussion}\label{section:Discussion}

In \Cref{section:comparison_study}, we compared the performance of {\tt sLORETA}, {\tt ADMM}, {\tt VPAL}, {\tt FISTA} and {\tt VPAL$_{\tt W}$} on the \emph{small} source-localization task. Consistent with expectations for this highly underdetermined EEG source localization problem, the relative error for all algorithms is substantial, as depicted in \Cref{fig:compplots}. However, as illustrated in \Cref{fig:compresults}, all methods successfully localized the active region, exhibiting qualitatively comparable levels of performance. \Cref{table:cres} confirms that {\tt ADMM} achieved the best performance in terms of PSNR, RE, SSIM while all other methods showed similar performance in these metrics.  All $\ell_1$-regularized methods resulted in reconstructions with sparser 
$\bfD_2\bfX$ than {\tt sLORETA}, except for {\tt FISTA}, but it is difficult to compare the sparsity of $\ell_1$-regularized against each other due to varied sparsity parameter, $\mu$.  Moreover, {\tt VPAL$_{\tt W}$} performed best with respect to SED, and all $\ell_1$-regularized methods significantly outperformed {\tt sLORETA} in accuracy metrics except for SED.  Nevertheless, regarding the runtime, {\tt VPAL} and {\tt VPAL$_{\tt W}$} outperform {\tt ADMM} and {\tt FISTA} on this small simulation study, but are outperformed by the instantaneous reconstruction of {\tt sLORETA}, as expected. This motivates us to investigate the scalability of the algorithms on larger datasets.

Improved runtime for {\tt VPAL} and {\tt VPAL$_{\tt W}$} is further apparent in the results of \Cref{section:scalability}.  For all resolutions of the mesh, $n$, tested, {\tt ADMM} is consistently slower than our proposed methods by several orders of magnitude. As depicted in \Cref{fig:scalaplot}, the runtime of {\tt ADMM} increases more rapidly with respect to mesh size $n$ compared to the other two algorithms. This leads to {\tt ADMM} becoming computationally intractable for high-resolution datasets. 
 Both {\tt VPAL} and {\tt VPAL$_{\tt W}$} scale more favorably than {\tt ADMM}, with {\tt VPAL}$_{W}$ exhibiting superior computational efficiency compared to {\tt VPAL} for larger mesh sizes.

We also note that while {\tt ADMM} outperforms {\tt VPAL$_{\tt W}$} for the small mesh, {\tt VPAL$_{\tt W}$} offers better performance metrics for the medium-sized mesh in PSNR.  Qualitatively, {\tt VPAL$_{\tt W}$} produces better reconstructions than {\tt VPAL}.  This is particularly evident in the largest dataset visualization where {\tt VPAL} captures the strongest signal and fails to identify some other signals (\Cref{fig:compresults_scala}). {\tt VPAL$_{\tt W}$}, on the other hand, identified all the signals present in the ground truth.

As $T$ is varied (see \Cref{fig:time_scaling}), we observe that {\tt VPAL$_{\tt W}$} has slower runtimes than {\tt VPAL} for small values of $T$.  However, the growth rate of the runtime of {\tt VPAL$_{\tt W}$} is slower on average than that of {\tt VPAL} and thus {\tt VPAL$_{\tt W}$} offers improved runtime for $T$ large.  {\tt VPAL$_{\tt W}$} offers the additional advantage of nearly real-time reconstruction of incoming data points.  For small meshes, the algorithm can reconstruct incoming time points in $0.538 \pm 0.023$ seconds on average.  For the medium and large datasets, the average reconstruction time per time point is scaled to $2.465 \pm 0.020$ and $25.667 \pm 2.607$ seconds, respectively.   

The efficient runtime of {\tt VPAL} and {\tt VPAL$_{\tt W}$} benefit significantly from our data-efficient implementation of the {\tt graphTV} operator as defined in \Cref{eq:graphTV}, as well as our linearized step size selection routine. The {\tt graphTV} operator can be efficiently implemented in MATLAB as follows:
\begin{align}
    \bfD_2\bfX  = \bfw \hadamard (\bfX(\bfn_1, :) - \bfX(\bfn_2, :)),
\end{align}
where $\bfw$ is the diagonal of $\bfW$. Here, $\bfn_1(i), \bfn_2(i)$ are the indices of the two nodes that define the $i$-th edge.  For the EEG application, we assume unweighted, undirected edges, and thus $\bfW = \bfI_{m}$.  We note that, due to implementing the mesh as a graph, this is an efficient computation, depending linearly on the number of edges $m$.

\section{Conclusions \& Outlook}\label{section:ConclusionsandOutlook}

In this work, we presented a novel approach to address the EEG source localization inverse problem and leverage  a data-efficient graph total variation ({\tt graphTV}) operator on the cortical mesh to regularize spatial aspects of the reconstruction coupled with regularization on the time dynamics.  Our main contribution was the introduction of two fast solvers for this inverse problem--the \emph{variable projected augmented Lagrangian nonlinear conjugate gradient} algorithm ({\tt VPAL}) and the windowed variation, {\tt VPAL$_{\tt W}$}.  Additionally, we verified the theoretical backing to our {\tt VPAL} algorithm, proving convergence for separable convex functions subject to linear constraints--in particular, the EEG source localization problem.

We investigated the performance of our proposed methods in terms of both runtime and reconstruction quality. Our novel algorithms demonstrated comparable performance to standard elastic net solvers like {\tt ADMM} and {\tt FISTA} while offering significantly faster runtimes and superior scalability. {\tt VPAL$_{\tt W}$} is particularly efficient in handling large datasets and is well-suited for real-time applications, such as monitoring brain activity in clinical settings.

While our study highlights the advantages of our novel approach, it also has several limitations. Notably, hyperparameter selection was performed using a brute-force approach, and further statistical and theoretical analysis is needed to optimize this process. Additionally, more rigorous termination criteria are required. Furthermore, our evaluation relied on synthetic datasets, and future work will involve testing our methods on real-world data to assess their clinical applicability. 

In future research, we aim to investigate applications of variable projection-based approaches in solving nonlinear inverse problems and further develop theoretical foundations. We also plan to explore the potential of {\tt VPAL$_{\tt W}$} as a filtering technique in data assimilation. The proposed {\tt graphTV} operator and algorithms have broad applicability to various data-intensive graph-based inverse problems, including those involving different graph structures (e.g., directed, weighted, etc.).

\vspace{0.5cm}

{\bf Acknowledgment.} 
This work is partially supported by the 2023--2024 Emory URC interdisciplinary award (Chung) and by the National Science Foundation (NSF) under grant  DMS-2152661 and [DMS-1913136, DMS-2152704] for Chung and Renaut, respectively. Any opinions, findings, conclusions, or recommendations expressed in this material are those of the authors and do not necessarily reflect the views of the National Science Foundation.

\printbibliography

\begin{appendix}

\section{Supplemental Information}

\subsection{Implementation Details}\label{appendix:algo_options}

Algorithms implemented for this study will be made available on Github upon acceptance.  \Cref{tab:algo_opts} summarizes choices for algorithm implementation unless otherwise mentioned.  Numerical experiments indicated that these choices resulted in the most stable performance of our methods. Note that we input two separate sets of parameters for ${ \tt VPAL}_{\tt W}$--one for the initial window ({\tt optionsInit}), and one for subsequent windows ({\tt optionsLoop}).

\begin{table}[htb!]
    \centering
    \begin{tabular}{c||c|c|c|c|c}
         & {\tt ADMM} & {\tt VPAL} & {\tt FISTA} & ${ \tt VPAL}_{\tt W}$ & ${\tt VPAL}_{\tt W}$ \\

         & & & & \small{{\tt (optionsInit)}} &\small{{\tt (optionsLoop)}} \\
         \hline
         \hline

         step size selection &   & linearized & 
        & linearized & linearized \\
         \hline 
         {\tt tol} & $10^{-5}$ & $10^{-5}$ & $10^{-5}$ & $10^{-5}$  & $10^{-5}$ \\
         \hline

         {\tt maxIter} & 1,000 & 1,000 & 1,000  & 1,000 & 100 \\ \hline

         $\lambda$ &$10^{-1}$ & $10^{-5}$ & $10^{-3}$ & $10^{-5}$ & $10^{-5}$ \\
         \hline
         $\mu$ & $10^{-3}$ & $10^{-3}$ & $10^{-4}$&  $10^{-3}$ & $10^{-3}$ \\
         \hline
         $\eta$ & 10 & 10 & 10 & 10 & 10  \\
         \hline
         $L$ & &  &1,000 & &  \\
         \hline
         
    \end{tabular}
    \caption{Default parameters used in the implementation of algorithms unless otherwise mentioned.  $\lambda$ and $\mu$ were selected according to the procedure outlined in \Cref{section:paramselection}.}
    \label{tab:algo_opts}
\end{table}

In our implementation of {\tt ADMM}, we formulate the least-squares problem as a Sylvester Equation system, and use MATLAB's specialized Sylvester Equation ({\tt sylv}) solver to update $\bfX$ according to the method described in \Cref{appendix:admm_update}. For all algorithms, we use standard stopping criteria--see \cite{doi:10.1137/1.9781611975604}.

\subsection{Hyperparameter Study}\label{section:paramselection}

Hyperparameter selection is challenging in inverse problems and several strategies exist for the estimation of the optimal regularization and penalty parameters $\mu, \lambda, \eta$ \cite{hansen2010discrete, rankdeficient1998, bardsley, chung2022variable}.  Most existing approaches are only applicable when there are two parameters--one for the scaling of the $\ell_1$ term and one for the scaling in the penalty term of the augmented Lagrangian.  The introduction of an additional $\ell_2$ time regularization parameter makes these approaches more challenging and here we omit  a direct estimation of those parameters.  Furthermore, the Lipschitz constant used in the {\tt FISTA} algorithm is difficult to estimate and thus can be considered another hyperparameter that requires tuning.  As parameter estimation is not the focus of this paper, we adopt a naive brute-force approach for the estimation of suitable parameters and admit the limitation that rigorous statistical analysis is required for more optimal computation of these values.

Parameters used in the comparison experiment were selected by varying $\mu$ and $\lambda$ over multiple values and selecting the $(\mu, \lambda)$ pair that produced the smallest relative error after 1,000 iterations.  Experiments indicated that $[10^{-5}, 10^2]$ is a suitable range for the selection of both values and that the value of $\eta$ had little effect on the results of each algorithm as long as it was large enough. We used $\eta = 10$ in our experiments.  \Cref{fig:heatmaps} shows reconstruction errors for different choices of $\mu$ and $\lambda$ over a range of values. Selection of the Lipschitz constant, $L$ in {\tt FISTA}, is crucial to prevent divergence of the algorithm.  Overestimation of $L$ was found to promote convergence, and so a relatively large value, $L = 1,\!000$, was used for numerical experiments.

\begin{figure}[hbt!]
    \centering
    \includegraphics[scale = 0.3]{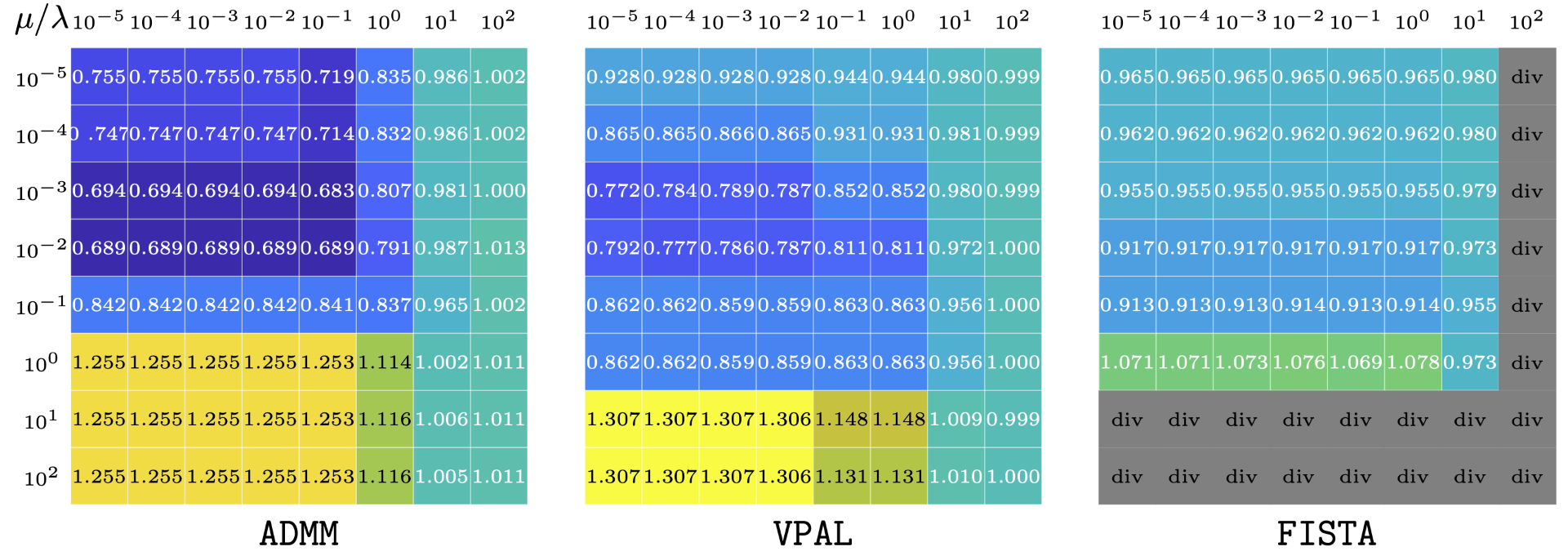}
    \caption{Heatmaps representing the relative error for the output of {\tt ADMM}, {\tt VPAL} and {\tt FISTA} as parameters $\mu$, $\lambda$ are varied.  {\tt div} refers to divergence in the algorithm for the respective parameters. The heatmap for {\tt VPAL$_{\tt W}$} experiences a similar behavior as {\tt VPAL} and is omitted.}
    \label{fig:heatmaps}
\end{figure}

The results presented in  \Cref{section:paramselection} indicate that {\tt ADMM} provides the best reconstruction results for small datasets and is most robust with respect to hyperparameter selection. {\tt FISTA} is susceptible to divergence for inappropriate selection of hyperparameters, and   has the additional disadvantage that $L$ is another parameter that must be tuned.  A clear limitation of our work is that we do not provide any analysis for the performance concerning $\eta$ or $L$ for {\tt FISTA}.

\section{Derivations}

\subsection{Equivalent Vectorized Forms of $f$ and \texorpdfstring{$\calL_{\rm aug}$}{Laug} }\label{appendix:vectorizedforms}

In this section, we show the derivation of the vectorized form of our objective function and augmented Lagrangian.  The objective function is given by \Cref{equation:objective},

\begin{align}
    f(\bfX) =  \thf \norm[\fro]{\bfL\bfX-\bfB}^2 + \tfrac{\lambda^2}{2}\norm[\fro]{\bfX\bfD_1}^2 + \mu\norm[1,1]{\bfD_2\bfX}.
\end{align}
If we define \begin{align}
\tilde{\bfL} = \bfI_T \otimes \bfL \in \bbR^{nT \times nT}, 
\end{align}
Kronecker identities imply \cite{LOAN200085} 
\begin{align}
\norm[\fro]{\bfL\bfX - \bfB}^2 =  \norm[\fro]{\tilde{\bfL}\bfx - \bfb}^2.
\end{align}
Analogously, for
\begin{align}
    \tilde{\bfD}_2 = \bfI_T \otimes \bfD_2 \in \bbR^{mT \times nT},
\end{align}
we have
\begin{align}
    \norm[1, 1]{\bfD_2\bfX}  = \norm[1]{\tilde{\bfD}_2\bfx}
\end{align}
Lastly, letting
\begin{align}
    \tilde{\bfD_1} = \bfD_1^\top \otimes \bfI_{n} \in \bbR^{n(T-1)\times nT},
\end{align} gives
\begin{align}
 \norm[\fro]{\bfX\bfD_1}^2 = \norm[2]{\tilde{\bfD_1}\bfx}^2.
\end{align}
All together, the vectorized form of our objective function can be written as

\begin{align}
    f(\bfx) = \frac{1}{2}\norm[2]{\tilde{\bfL}\bfx - \bfb} + \frac{\lambda^2}{2}\norm[2]{\tilde{\bfD_1}\bfx}^2 + \mu\norm[1]{\tilde{\bfD}_2\bfx}
\end{align}
\begin{align}
    = \frac{1}{2}\norm[2]{\begin{bmatrix}
        \tilde{\bfL} \\
        \tilde{\lambda\bfD_1}
    \end{bmatrix}\bfx - \begin{bmatrix}
        \bfb \\ \textbf{0}_{n(T-1)}
    \end{bmatrix}}^2 + \mu\norm[1]{\tilde{\bfD}_2\bfx}.
\end{align}
Denoting $\bfA = 
\begin{bmatrix}
\tilde{\bfL} \\
\tilde{\lambda\bfD_1}
\end{bmatrix} \in \bbR^{Tp + (T-1)n \times Tn}$ and $\bfw = \begin{bmatrix}
    \bfb \\ \textbf{0}_{n(T-1)}
\end{bmatrix} \in {\bbR^{Tp+n(T-1)}}$, we obtain 
\begin{align}
 f(\bfx) = \frac{1}{2}\norm[2]{\bfA\bfx - \bfw} + \mu\norm[1]{\tilde{\bfD}_2\bfx}.
\end{align}
Analogously, 
\begin{align}
    \calL_{\rm aug}(\bfx, \bfy ; \bfc) = \thf \norm[2]{\tilde{\bfL}\bfx - \bfb}^2 + \tfrac{\lambda^2}{2}\norm[2]{\tilde{\bfD}_1\bfx}^2 + \tfrac{\eta^2}{2}\norm[2]{\tilde{\bfD}_2\bfx - \bfy + \bfc}^2  + \mu\norm[1]{\bfy} - \norm[2]{\bfc}^2
\end{align}
\begin{align}
    = \thf\norm[2]{\bfA\bfx - \bfw}^2 + \tfrac{\eta^2}{2}\norm[2]{\tilde{\bfD}_2\bfx - \bfy + \bfc}^2  + \mu\norm[1]{\bfy} - \norm[2]{\bfc}^2.
\end{align}

\subsection{{\tt ADMM} Update}\label{appendix:admm_update}
Here we provide details on the formulation of the least-squares problem in {\tt ADMM} as the solution to a Sylvester equation for the EEG inverse problem. Let us restate the objective function in \Cref{equation:aug_lag}, i.e.,
\begin{align*}
    \calL_{\textnormal{aug}}(\bfX, \bfY ; \bfC) =  \thf \norm[\fro]{\bfL\bfX-\bfB}^2 + \tfrac{\lambda^2}{2}\norm[\fro]{\bfX\bfD_1}^2 + \mu\norm[1,1]{\bfY} + \tfrac{\eta^2}{2}\norm[\fro]{\bfD_2\bfX-\bfY+\bfC}^2 + \norm[\fro]{\bfC}^2.
\end{align*}
Consistent with the {\tt ADMM} approach, we want to minimize this function with respect to $\bfX$. Ignoring constant terms with respect to $\bfX$ and letting $\bfG = \bfY-\bfC$ we may simplify the corresponding optimization problem as 
\begin{align}
    \min_{\bfX} \ h(\bfX) = \min_{\bfX}\thf \norm[\fro]{\begin{bmatrix}
        \bfL \\ \eta \bfD_2
    \end{bmatrix}\bfX-\begin{bmatrix}
        \bfB \\ \eta \bfG
    \end{bmatrix}}^2 + \tfrac{\lambda^2}{2}\norm[\fro]{\bfX\bfD_1}^2 .
\end{align}
By defining
\begin{align}
    \bfH =\begin{bmatrix}
        \bfL \\ \eta \bfD_2
    \end{bmatrix} \textnormal{ and } \bfK = \begin{bmatrix}
        \bfB \\ \eta \bfG
    \end{bmatrix},
\end{align}
we can rewrite $h$ as
\begin{align}
h(\bfX) = \thf \norm[\fro]{\bfH\bfX - \bfK}^2 + \tfrac{\lambda^2}{2}\norm[\fro]{\bfX\bfD_1}^2.
\end{align}
The gradient with respect to $\bfX$ of the additive terms in $h$ can be computed as
\begin{align}
\frac{\partial}{\partial\bfX} \norm[\fro]{\bfX\bfD_1}^2 =  \frac{\partial}{\partial\bfX} \trace{\bfD_1^\top \bfX^\top \bfX \bfD_1} = \bfX \bfD_1\bfD_1^\top + \bfX\bfD_1\bfD_1^\top = 2 \bfX\bfD_1\bfD_1^\top
\end{align}
and
\begin{align}
    \frac{\partial}{\partial\bfX} \norm[\fro]{\bfH\bfX - \bfK}^2 &= \frac{\partial}{\partial\bfX}\trace{(\bfX^\top\bfH^\top - \bfK^\top)(\bfH\bfX - \bfK)}\\
    &= \frac{\partial}{\partial\bfX}( \trace{\bfX^\top\bfH^\top\bfH\bfX} - \trace{\bfX^\top\bfH^\top\bfK} - \trace{\bfK^\top\bfH\bfX} + \trace{\bfK^\top\bfK})\\
    &= 2\bfH^\top\bfH\bfX - \bfH^\top\bfK - \bfH^\top\bfK = 2\bfH^\top\bfH - 2\bfH^\top\bfK.
\end{align}
Hence the gradient of $h$ is given by 
\begin{align}
\frac{\partial h}{\partial \bfX} = \bfH^\top\bfH\bfX - \bfH^\top\bfK + \lambda^2 \bfX\bfD_1\bfD_1^\top.
\end{align}
According to first-order optimality condition and assumed convexity we obtain a global minimizer by equating to $\bf0$, i.e.,
\begin{align}\label{eq:sylv_solve}
 \bfH^\top\bfH\bfX  + \lambda^2 \bfX\bfD_1\bfD_1^\top =\bfH^\top\bfK.
\end{align}
Note this is the Sylvester Equation which can be solved efficiently using standard approaches. For our implementation of {\tt ADMM} we utilize MATLAB's specialized Sylvester Equation solver, {\tt sylv}.

\subsection{{\tt VPAL} Gradient Computation}\label{appendix:vpalncgupdate}
In this section, we provide details for the gradient computation used in $\Cref{alg:vpal}$.
We compute the gradient of \Cref{equation:aug_lag} with respect to $\bfX$ assuming $\bfC$ to be constant and $\bfY$ given according to soft-thresholding as computed in \Cref{eq:shrink}, i.e., $\bfY = \bfZ(\bfX)$.  We begin with the vectorized form of our augmented Lagrangian, \Cref{equation:vectorizedauglag}:

\begin{align*}
       \calL_{\textnormal{aug}}(\bfx, \bfy ; \bfc) =  \thf\norm[2]{\bfA\bfx - \bfw}^2 + \tfrac{\eta^2}{2}\norm[2]{\tilde{\bfD}_2\bfx - \bfy + \bfc}^2 + \mu \norm[1]{\bfy} - \tfrac{\eta^2}{2}\norm[2]{\bfc}^2.
\end{align*}

Taking the gradient with respect to $\bfx$, we obtain 
\begin{align}
\bfg = \tilde{\bfL}^\top(\tilde{\bfL}\bfx - \bfb) + \lambda^2 \tilde{\bfD}_1^\top \tilde{\bfD}_1\bfx  +  \eta^2\tilde{\bfD}_2^\top(\tilde{\bfD}_2\bfx - \bfy - \bfc),
\end{align}
where $\tilde{\bfL}, \tilde{\bfD_1}$ and $\tilde{\bfD}_2$ are defined the same way as in \Cref{appendix:vectorizedforms}. To avoid computation of the large matrices formed from the Kronecker product, we note
\begin{align}
    \tilde{\bfL}\bfx &= \vec{\bfL\bfX},
\\
\tilde{\bfD}_2\bfx &= \vec{\bfD_2\bfX},  
\\
    \tilde{\bfL}^\top\tilde{\bfL}\bfx 
    &= (\bfI_T\otimes \bfL^\top) \vec{\bfL\bfX} = \vec{\bfL^\top \bfL\bfX}
\\
\tilde{\bfD}_1\bfx &= (\bfD_1^\top \otimes \bfI_n)\bfx = \vec{\bfX\bfD_1} = \vec{\bfI\bfX\bfD_1}. \\
\tilde{\bfD}_1^\top \tilde{\bfD_1}\bfx = \tilde{\bfD}_1^\top \vec{\bfX\bfD_1} &= (\bfD_1 \otimes \bfI_n) \vec{\bfX\bfD_1} = \vec{\bfX\bfD_1\bfD_1^\top}.
\end{align}

All together, the vectorized gradient is computed as
\begin{align}
\bfg = \vec{\bfL^\top(\bfL\bfX - \bfB)} + \lambda^2 \vec{\bfX\bfD_1\bfD_1^\top} + \eta^2\vec{\bfD_2^\top(\bfD_2\bfX - \bfY + \bfC)}.
\end{align}

\end{appendix}

\end{document}